\newtheorem{lemma}{Lemma}
\newtheorem{theorem}{Theorem}
\newtheorem{assumption}{Assumption}
\title{%
A Generative Framework for Causal Estimation via Importance-Weighted Diffusion Distillation
}
\author{%
  Xinran Song\thanks{The first two authors contributed equally. Correspondence to: \texttt{mingyuan.zhou@mccombs.utexas.edu}.}~ ,~~ Tianyu Chen$^*$,~~and Mingyuan Zhou
    \\
  The University of Texas at Austin\\
  Austin, TX 78731 \\
  \texttt{\{xinran.song,tianyuchen\}@utexas.edu,~mingyuan.zhou@mccombs.utexas.edu} \\
}
\begin{document}

\maketitle

\begin{abstract}
Estimating individualized treatment effects from observational data is a central challenge in causal inference, largely due to covariate imbalance and confounding bias from non-randomized treatment assignment. While inverse probability weighting (IPW) is a well-established solution to this problem, its integration into modern deep learning frameworks remains limited. In this work, we propose Importance-Weighted Diffusion Distillation (IWDD), a novel generative framework that combines the pretraining of diffusion models with importance-weighted score distillation to enable accurate and fast causal estimation—including potential outcome prediction and treatment effect estimation. We demonstrate how IPW can be naturally incorporated into the distillation of pretrained diffusion models, and further introduce a randomization-based adjustment that eliminates the need to compute IPW explicitly—thereby simplifying computation and, more importantly, provably reducing the variance of gradient estimates.  Empirical results show that IWDD achieves state-of-the-art out-of-sample prediction performance, with the highest win rates compared to other baselines, significantly improving causal estimation and supporting the development of individualized treatment strategies. We will release our PyTorch code for reproducibility and future research.

\end{abstract}

\vspace{-3mm}
\section{Introduction}
\vspace{-1mm}
In causal inference, the Neyman–Rubin potential outcomes (PO) framework \citep{rubin2005causal} formalizes causal effects by comparing potential outcomes under different treatments. The Fundamental Problem of Causal Inference \citep{holland1986} highlights that, for any given unit, only one of the potential outcomes can be observed—the one corresponding to the treatment actually received—while the counterfactual remains unobserved. In randomized controlled trials (RCTs), randomization ensures that treatment assignment is independent of potential outcomes, thus eliminating confounding bias. However, in most observational studies, treatment assignment is typically non-random and may depend on patient-level covariates, leading to covariate imbalance and confounding. This complicates potential outcome estimation and hinders the development of reliable individualized treatment recommendations—particularly in data-scarce settings.

Existing approaches such as inverse probability weighting (IPW) \citep{robins1994estimation} address covariate imbalance by reweighting data to approximate RCTs. %
However, IPW can be unstable due to challenges in propensity score estimation \citep{liao2022variance, ding2023coursecausalinference}, particularly when propensity scores approach 0 or 1—resulting in extreme weights and high-variance estimators. These issues are further exacerbated when applying IPW to generative models, where propensity networks could be subject to miscalibration %
and covariate representations may be poorly aligned with treatment assignment \citep{pmlr-v119-kallus20a}. 
As a result, incorporating IPW in a stable and effective manner into
generative frameworks—particularly diffusion-based models—remains an under-addressed challenge.

To address the limitations of existing causal estimation methods, we propose \emph{Importance-Weighted Diffusion Distillation} (IWDD), a novel generative framework for this task. IWDD first pretrains a covariate- and treatment-conditional diffusion model using observational data, then incorporates IPW into its distillation process. An important advantage of this two-stage procedure—diffusion pretraining followed by IPW-modulated distillation—is that pretraining allows the model to fit the in-sample distribution well, while distillation focuses on learning a conditional generator that adjusts for confounding and covariate imbalance, improving robustness for out-of-sample prediction.

We further show that the IPW-modulated distillation loss can be simplified via a randomization-based adjustment under importance reweighting, eliminating the need to explicitly compute IPW. This not only simplifies implementation but also mitigates approximation bias and numerical instability associated with propensity score estimation. More importantly, the resulting importance-weighted distillation loss is theoretically shown to reduce the variance of gradient estimates, making IWDD a stable and reliable generative approach for  causal estimation.

Another inherent benefit of IWDD is its significantly faster sampling speed compared to the pretrained conditional diffusion model. It produces samples in a single forward pass through the network, while the pretrained teacher model requires many iterative refinement steps.

Through extensive empirical studies, we demonstrate that IWDD is an effective approach for training a one-step generator for causal estimation. This establishes IWDD as not only a significantly faster alternative to conditional diffusion models pretrained on observational data, but also a more accurate method for addressing confounding and covariate imbalance inherent in causal inference settings.

We summarize our key contributions as follows:
\begin{itemize}
    \item We propose IWDD, a novel generative framework for causal estimation that pretrains a conditional diffusion model and distills it into a fast and high-performing one-step generator.
    
    \item IWDD is the first to incorporate randomized control adjustment into the distillation process, enabling effective correction for confounding and imbalanced treatment assignment.

    \item We introduce an IPW-modulated diffusion distillation objective and an improved variant that eliminates the need to explicitly estimate the propensity score.
    We provide theoretical analysis showing that this variant reduces gradient variance during distillation.

    \item Empirically, IWDD achieves state-of-the-art performance on multiple benchmark datasets for causal effect estimation, advancing the development of individualized treatment strategies.

\end{itemize}

\section{Related Work}

\textbf{CATE Estimation and PO Prediction. }
Estimating the Conditional Average Treatment Effect (CATE) has been extensively studied, with approaches broadly categorized into meta-learners, representation learning, and generative models. Meta-learners such as the S-learner and T-learner \citep{K_nzel_2019} recast CATE estimation as a supervised learning problem but are sensitive to covariate imbalance. This limitation has motivated balancing-based methods such as TARNet \citep{curth2021inductivebiasesheterogeneoustreatment,curth2021nonparametric} and CFR \citep{pmlr-v70-shalit17a}.
Generative approaches like GANITE \citep{yoon2018ganite} further model counterfactual distributions using adversarial training. More recent methods adopt doubly robust strategies, including the DR-learner \citep{kennedy2023optimaldoublyrobustestimation} and RA-learner \citep{curth2021nonparametric}, which combine nuisance component estimation with pseudo-outcome regression to enhance robustness under limited data. TEDVAE \citep{Zhang_Liu_Li_2021} employs variational autoencoders to disentangle latent confounders for treatment effect estimation. While many of these methods support PO prediction, their primary focus is CATE estimation, and they often exhibit limited accuracy when predicting individual-level outcomes.

\textbf{Diffusion Models for Causal Estimation. }
Recent work has begun to apply diffusion models to causal inference within the two major frameworks: the Structural Causal Model (SCM) framework \citep{pearl_causality_nodate} and the PO framework \citep{rosenbaum_central_1983}. While both frameworks are expressive, SCM focuses on modeling causal mechanisms through structural equations and graphs, whereas PO emphasizes treatment assignment and hypothetical interventions—making it particularly well-suited for policy evaluation and randomized experiments \citep{Pearl_2015}. In the SCM setting, diffusion models have been explored for counterfactual generation \citep{sanchez_diffusion_nodate, komanduri_causal_2024, chao_interventional_2023, shimizu_diffusion_2023} and causal discovery \citep{sanchez_diffusion_2023, mamaghan2023diffusionbasedcausalrepresentation, pmlr-v238-lorch24a, pmlr-v238-varici24a}, often relying on known or inferred causal graphs and strong structural assumptions. In contrast, diffusion models under the PO framework remain relatively underexplored. DiffPO \citep{ma2024diffpo} is among the first to use conditional diffusion models to learn potential outcome distributions given covariates and treatment assignments, but it exhibits limitations in its handling of propensity reweighting and sampling efficiency. Our work, also grounded in the PO framework, directly addresses these challenges, aiming to achieve accurate individual-level potential outcome prediction and reliable treatment effect estimation.

\textbf{Diffusion Distillation. }
Diffusion models have been developed to model complex data distributions and enable high-quality sample generation in high-dimensional spaces \citep{sohldickstein2015deepunsupervisedlearningusing, song2019generative, ho2020denoisingdiffusionprobabilisticmodels, song2020score}. Despite their impressive performance across domains, their high computational cost—stemming from the need for hundreds or even thousands of iterative refinement steps—has led to the emergence of diffusion distillation techniques that compress this process into one or a few generation steps.

A foundational strategy in diffusion distillation is to minimize a statistical divergence between the model distribution and the data distribution in the noisy space induced by forward diffusion. While distribution matching in this noisy space was pioneered by Diffusion GAN \citep{wang2022diffusion, zheng2022truncated}, it relies on noisy samples to represent the noisy distribution—unlike diffusion distillation methods that leverage pretrained diffusion models to estimate the score of the noisy distribution. A widely adopted divergence in this context is the KL divergence \citep{poole2022dreamfusiontextto3dusing2d}. Although the KL divergence itself is intractable, its gradient has a tractable form that enables alternating optimization: alternating between estimating the generator’s score and updating the generator. Methods that follow this principle include Variational Score Distillation (VSD) \citep{wang2023prolificdreamerhighfidelitydiversetextto3d}, Diff-Instruct \citep{luo2024diffinstructuniversalapproachtransferring}, Distribution Matching Distillation \citep{yin2024onestep}, and their extensions.

Score identity Distillation (SiD) \citep{zhou2024score} further advances this line of work. It does not require access to the original training data and is therefore a data-free method. By viewing the forward diffusion process through the lens of semi-implicit distributions \citep{yin2018semi, yu2023hierarchical} and leveraging associated score identities \citep{robbins1992empirical, efron2011tweedie, vincent2011connection}, SiD replaces the KL divergence with a Fisher divergence and introduces a corresponding alternating optimization procedure. The resulting distillation algorithm achieves one-step generation quality comparable to that of the original pretrained diffusion model after many denoising steps.

\begin{figure}[t]
    \centering
    \includegraphics[width=1\textwidth]{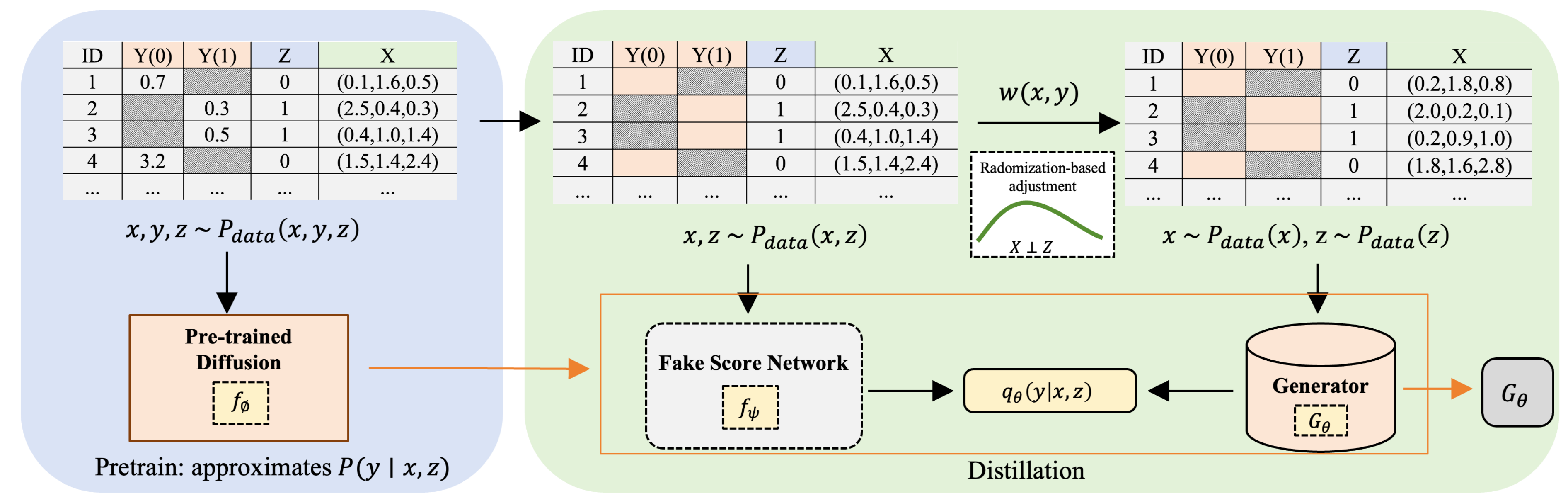}
    \caption{\small Overview of IWDD. We first pretrain a conditional diffusion model \( f_\phi(y \mid x, z) \) to approximate the true conditional distribution \( p(y \mid x, z) \) over observational data. In the distillation stage, we train a generator \( q_\theta(y \mid x, z) \) using marginal sampling, which implicitly applies %
    importance weighting without requiring explicit propensity estimation. We apply a randomization-based sampling adjustment: covariates \( x \) are shuffled and treatments \( z \) are independently sampled. The distillation algorithm is detailed in Algorithm~\ref{alg:iwdd}. }
    \label{fig:algorithm}
    \vspace{-2mm}
\end{figure}

\section{IWDD: Importance-Weighted Diffusion Distillation}
\label{sec:method}

\textbf{Notation.} Under the Neyman–Rubin PO framework \citep{rubin2005causal}, we consider an observational dataset 
\(\mathcal{D} = \bigl\{(X_i, Z_i, Y_i)\bigr\}_{i=1}^n\), 
where \(X \in \mathcal{X} \subseteq \mathbb{R}^{d}\) denotes covariates, \(Z \in \{0,1\}\) is a binary treatment indicator, and \(Y \in \mathcal{Y} \subseteq \mathbb{R}\) is the observed outcome of interest. Let \(\pi(x) = P(Z = 1 \mid X = x)\) denote the propensity score, and let \(Y(z)\) be the potential outcome under treatment \(Z = z\). 
We denote the observational data distribution as \( p_{\text{data}}(x, z, y) \), and the true conditional outcome distribution as \( p(y \mid x, z) \).
To ensure identifiability of average causal effects from observational data, we adopt the following standard assumptions:

\begin{assumption}[Consistency, Unconfoundedness, and Overlap]
\label{assump: consist}
    (1) \textbf{Consistency:} If individual \(i\) receives treatment \(Z_i\), then we only observe \(Y_i = Y_i(Z_i)\).
    (2)  \textbf{Unconfoundedness:} There are no unmeasured confounders, $i.e.$, \(\{Y_i(0), Y_i(1)\} \perp Z_i \mid X_i\).
    (3) \textbf{Overlap (Positivity):} Each individual has a non-zero probability of receiving either treatment level; that is, \(0 < \pi(x) < 1\) for all \(x \in \mathcal{X}\).
\end{assumption}
\textit{Note that under Assumption \ref{assump: consist}, we have \(\mathbb{E}[y \mid x,z] = \mathbb{E}[Y(z) \mid X = x]\), allowing us to use a fitted covariate- and treatment-conditional model to predict individual potential outcomes.}

\textbf{Problem Formulation.}
When using a generative model for causal estimation, given observational data \(\{x_i, z_i, y_i\}_{i=1}^n\), one can pretrain a covariate- and treatment-conditional diffusion model and use the reverse diffusion process to approximate \(p(y \mid x, z)\), conditioned on the input \((x, z)\). This model can then be directly applied for generative causal estimation. We show that this approach serves as a strong baseline for in-sample causal estimation. However, not only is it slow to generate random samples of \(y\) given \((x, z)\), but its performance also noticeably degrades in out-of-sample settings where \((x, z)\) lie in low-density regions of the training data.

To eliminate confounding effects and mitigate issues arising from the imbalanced distribution of treatment assignments \(z\) in the training dataset, we propose a diffusion distillation-based framework that incorporates the principles of RCTs into the distillation process. The ultimate goal of this framework is to learn a generative distribution \(q_{\theta}(y \mid x, z)\) that effectively accounts for confounding and covariate-treatment imbalance, which are common challenges in real-world observational data. We define this distribution implicitly via its generation process:
\begin{equation}
    y_g = G_\theta(x, z, \varepsilon), \quad \varepsilon \sim \mathcal{N}(0, \mathbf{I}),
\end{equation}
where \(G_{\theta}\) is a deep neural network-based one-step generator parameterized by \(\theta\). Causal estimation is then conducted by generating samples \(y_g\) from this model, conditioned on both the covariates \(x\) and the treatment assignment \(z\). 
In what follows, we detail the construction of IWDD as an effective approach for training \(G_{\theta}\), which outperforms the pretrained diffusion model in estimating the conditional distribution of \(y\) given \(x\) and \(z\), particularly when \( (x, z) \) lie in low data density regions.

\subsection{Pretraining of Covariate- and Treatment-Conditional Diffusion Models}

We begin by fitting a covariate- and treatment-conditional diffusion model \citep{sohldickstein2015deepunsupervisedlearningusing, ho2020denoisingdiffusionprobabilisticmodels,
han2022card} \( f_\phi(y \mid x, z) \), parameterized by \(\phi\), to approximate the true conditional distribution \( p(y \mid x, z) \) over observational data \( (x, z, y) \sim p_{\text{data}}(x, z, y) \):

Given a data point \( (y_0, x, z) \) from  \( p_{\text{data}}(y, x, z) \), in the forward process, Gaussian noise is gradually added to the initial outcome \( y_0 \) over \( T \) discrete time steps. This produces a sequence of progressively noisier samples \( y_1, \ldots, y_T \). The forward process is defined as: $q(y_t \mid y_0) = \mathcal{N}(a_t y_0, \sigma_t^2 I)$, with \( a_t \in [0,1] \). To generate $y_t$ given $y_0$, %
we apply the standard reparameterization: $y_t = a_t y_0 + \sigma_t \epsilon_t,  \epsilon_t \sim \mathcal{N}(0, I)$. We use the EDM schedule of \citet{karras2022edm} that sets $a_t = 1$. 
To learn the reverse process, we train the conditional denoising function \( f_\phi \) using the following objective:
\[
\mathcal{L}_\phi = \mathbb{E}_{\sigma, y, {n}}\left[\lambda(\sigma)\|f_{\phi}(y_t ; \sigma, x, z)-{y}\|_2^2\right].
\]
We follow \citet{karras2022edm}'s training schedule (details in Appendix~\ref{sec:iwdd}). A well-trained teacher diffusion model \(f_\phi\) is capable of estimating \(\mathbb{E}[y \mid y_t, x, z]\).
It serves as both the teacher and the initialization for the subsequent adjusted distillation.

DiffPO~\citep{ma2024diffpo}, a recent baseline, incorporates inverse propensity score reweighting into this diffusion loss to adjust for confounding between \(x\) and \(z\):
\begin{align}
\label{eq:loss_diffpo}
\mathbb{E}_{(y_0, x, z) \sim p(y, x, z), \; \epsilon \sim \mathcal{N}(0, I), \; t} \bigl[ w(x,z) \| \epsilon - \epsilon_\phi( y_0 + \sigma_t \epsilon, t \mid x, z) \|^2 \bigr].
\end{align}
where $w(x, z) = \frac{1}{p(z \mid x)} = \frac{z}{\pi(x)} + \frac{1 - z}{1 - \pi(x)}$, and $\pi(x) = p(z = 1 \mid x)$ denotes the propensity score.

\subsection{Distillation via Importance Reweighting}

Unlike training a diffusion model, which requires multiple reverse steps for sampling, our goal is to train a one-step conditional generator \(q_\theta(y \mid x, z)\) that approximates the true conditional distribution \(p(y \mid x, z)\) for all \((x, z) \in \mathcal{X} \times \mathcal{Z}\). In observational data, however, samples \((x,z)\sim p_{\text{data}}(x,z)\) are typically \emph{not} drawn
independently across covariates and treatment. That is, the treatment assignment~\(z\) may depend on covariates \(x\), inducing imbalance across treatment groups.
Consequently, a model that optimizes a vanilla divergence
\(\mathbb{E}_{p_{\text{data}}(x,z)}[D(q_\theta(y\mid x,z), p(y\mid x,z))]\) can bias distillation toward regions where  \((x, z)\) pair  occurs more frequently.  %
In practice, this can lead to better performance on the majority treatment group while degrading generalization in underrepresented regions, as demonstrated in our synthetic example in Section~\ref{sec:synthetic-iwdd}.

\textbf{IPW-based importance weighting.}  
To correct for the sampling bias arising from the non-random treatment assignment in the observed joint distribution \(p_{\text{data}}(x, z)\), we apply an importance weighting factor based on the discrepancy between \(p_{\text{data}}(x, z)\) and the product of marginals \(p_{\text{data}}(x) p_{\text{rct}}(z)\), where \(p_{\text{rct}}(z) = \text{Bernoulli}(z; 0.5)\) reflects the ideal joint distribution of \(x\) and \(z\) under RCTs. We reweigh every sample by:
\begin{align}
w(x, z) = \frac{p_{\text{data}}(x)p_{\text{rct}}(z)}{p_{\text{data}}(x, z)}=
\frac{p_{\text{rct}}(z)}{p_{\text{data}}(z \mid x)},\label{eq:w}
\end{align}
where \(p_{\text{data}}(z=1\mid x)\equiv\pi(x)\) is the \emph{propensity score}
\citep{rosenbaum_central_1983}.
This leads to an %
inverse-propensity weighted %
divergence loss:
\begin{align}
  \mathcal{L}_{\theta}^{\text{IPW}}  = \mathbb{E}_{(x, z) \sim p_{\text{data}}(x, z)} \left[ w(x, z) \cdot D\left(q_{\theta}(y \mid x, z),\, p(y \mid x, z)\right) \right], \label{eq:dd}
\end{align}
where the divergence \( D(q_\theta, p) \) is defined as $D(q_\theta, p) = \mathbb{E}_{y \sim q}[d(q_\theta, p)]$, for some pointwise divergence measure \( d(q_\theta, p) \) (see Section~\ref{subsec:divergence_choice} for the specific choice of divergence we adopt). 

Recent works~\citep{ma2024diffpo, mahajan2024empiricalanalysismodelselection} %
also use the %
inverse-propensity weights \( 1 / p_{\text{data}}(z \mid x) \). They train a propensity network \( g_{\omega} \) to obtain \( \hat{\pi}(x) = g_{\omega}(x) \) and form the weights \( 1 / \hat{\pi}(x) \) (or \( 1 / [1 - \hat{\pi}(x)] \)). When \( p_{\text{data}}(z \mid x) \) approaches zero, this leads to an excessively large weight which causes numerical instability. Common approaches to improve stability include: (i) \textit{truncating} the estimated propensity scores to a fixed interval, and (ii) \textit{trimming} the sample by discarding units with propensity scores outside that interval. Although both stabilize the IPW estimators, they introduce additional arbitrariness~\citep{ding2023coursecausalinference}. Moreover, improper clipping risks nullifying the intended reweighting effect. We identified such an issue in DiffPO \citep{ma2024diffpo}. Although it performs well on some datasets, its implementation incorrectly truncates \(1 / p_{\text{data}}(z \mid x)\) to values below one---despite the fact that \(1 / p_{\text{data}}(z \mid x) \geq 1\) by definition.\footnote{See DiffPO’s official implementation at commit \texttt{43ebb60}: 
\url{https://github.com/yccm/DiffPO/blob/43ebb6048dc09b0315e8f25db9b5d00a95b9b3e0/src/main_model.py\#L144-L147}
} 
This improper implementation nullifies the intended effect of propensity score reweighting, effectively reducing the objective in Equation~\ref{eq:loss_diffpo} to a standard diffusion loss without any reweighting.

\textbf{Implicit importance weighting via marginal sampling.} 
Given the pitfalls of existing IPW-based importance weighting, we now introduce a key result showing that the importance reweighting objective in Equation~\ref{eq:dd} can be reparameterized without explicitly computing weights through training a neural network for propensity score.

\begin{lemma}
\label{lemma:iwdd}
The importance-weighted loss in Equation~\ref{eq:dd} is equivalent to the expected divergence under the product of marginals:
\begin{align}
    \mathcal{L}_{\theta}^{\emph{\text{IWDD}}} = \mathbb{E}_{(x, z) \sim p_{\emph{\text{data}}}(x)p_{\emph{\text{rct}}}(z)} \left[ D\left(q_{\theta}(y \mid x, z),\, p(y \mid x, z)\right) \right]. \label{eq:iwdd}
\end{align}
\end{lemma}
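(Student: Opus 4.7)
The plan is to prove Lemma~\ref{lemma:iwdd} by a direct change-of-measure argument. The key observation is that the weight defined in Equation~\ref{eq:w}, $w(x,z) = p_{\text{data}}(x)\,p_{\text{rct}}(z)/p_{\text{data}}(x,z)$, is precisely the Radon--Nikodym derivative of the product measure $p_{\text{data}}(x)\,p_{\text{rct}}(z)$ with respect to the joint $p_{\text{data}}(x,z)$. Consequently, weighting the integrand against $p_{\text{data}}(x,z)$ by this factor converts the expectation into an expectation under the product-of-marginals measure, which is exactly what Equation~\ref{eq:iwdd} asserts.

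Concretely, I would first rewrite $\mathcal{L}_{\theta}^{\text{IPW}}$ from Equation~\ref{eq:dd} as an integral,
\begin{equation*}
\mathcal{L}_{\theta}^{\text{IPW}} = \int p_{\text{data}}(x,z)\, w(x,z)\, D\!\left(q_{\theta}(y\mid x,z),\, p(y\mid x,z)\right) dx\,dz,
\end{equation*}
then substitute the definition of $w(x,z)$, cancel $p_{\text{data}}(x,z)$, and recognize the resulting integrand against $p_{\text{data}}(x)\,p_{\text{rct}}(z)$ as $\mathbb{E}_{(x,z)\sim p_{\text{data}}(x)p_{\text{rct}}(z)}[D(q_\theta(y\mid x,z), p(y\mid x,z))]$, giving $\mathcal{L}_{\theta}^{\text{IWDD}}$. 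Since $D$ is itself an expectation over $y\sim q_\theta(y\mid x,z)$ and depends on $(x,z)$ only as a conditioning variable, it can be treated as a fixed function of $(x,z)$ throughout the manipulation; no interchange of integration order beyond Fubini is required.

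The only subtlety is ensuring that the change of measure is well-defined. The overlap condition in Assumption~\ref{assump: consist} guarantees $0 < \pi(x) < 1$ for all $x \in \mathcal{X}$, so $p_{\text{data}}(z \mid x) > 0$ for both $z \in \{0,1\}$; hence $p_{\text{data}}(x,z) > 0$ wherever $p_{\text{data}}(x)\,p_{\text{rct}}(z) > 0$. This makes $w(x,z)$ finite $p_{\text{data}}$-almost surely and renders the two measures mutually absolutely continuous on the relevant support, so the cancellation above is rigorous. I do not anticipate any real obstacle: the lemma reduces to a one-line importance-sampling identity, and the heavy lifting needed to validate the reweighting, namely positivity of $\pi(x)$, has already been absorbed into the identifiability assumptions stated earlier in the paper.
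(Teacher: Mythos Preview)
Your proposal is correct and follows essentially the same approach as the paper's own proof: substitute the definition of $w(x,z)$ into $\mathcal{L}_\theta^{\text{IPW}}$, cancel $p_{\text{data}}(x,z)$, and recognize the result as the expectation under $p_{\text{data}}(x)\,p_{\text{rct}}(z)$. Your additional remark that overlap (Assumption~\ref{assump: consist}) guarantees the change of measure is well-defined is a nice touch that the paper's one-line proof leaves implicit.
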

\begin{proof}
Substituting the importance weight \( w(x, z) = \tfrac{p_{\text{data}}(x)p_{\text{rct}}(z)}{p_{\text{data}}(x, z)} \) into Equation~\ref{eq:dd}, we have:
\begin{align*}
\mathcal{L}_\theta^{\text{IPW}} 
&= \mathbb{E}_{(x, z) \sim p_{\text{data}}(x, z)} \left[  w(x, z) \cdot D(q_\theta(y \mid x, z),\, p(y \mid x, z)) \right] \\
&= \mathbb{E}_{(x, z) \sim p_{\text{data}}(x)p_{\text{rct}}(z)} \left[ D(q_\theta(y \mid x, z),\, p(y \mid x, z)) \right] = \mathcal{L}_\theta^{\text{IWDD}}.\qedhere
\end{align*}
\end{proof}
This equivalence enables us to apply the bias correction implicitly through a sampling adjustment: we sample \( x \sim p_{\text{data}}(x) \) and independently draw \( z \sim \mathrm{Bernoulli}(0.5)\). %
This %
yields the importance weight \( w(x, z) = 1 / p(z \mid x) \). This approach bypasses the need for propensity score estimation or weight clipping, as \( w(x, z) = 1 / p(z \mid x) \) is never explicitly computed. We will use the loss $\mathcal{L}_\theta^{\text{IWDD}}$ as the generator loss in Algorithm \ref{alg:iwdd}.

\textbf{Gradient variance advantage.} Although Lemma~\ref{lemma:iwdd} shows that \(\mathcal{L}_\theta^{\text{IWDD}}\) and \(\mathcal{L}_\theta^{\text{IPW}}\) have the same expectation, we find that gradient estimates under \(\mathcal{L}_\theta^{\text{IWDD}}\) can exhibit lower variance compared to those under \(\mathcal{L}_\theta^{\text{IPW}}\) when the weighting function is given by \(w(x,z)=p_{\mathrm{rct}}(z)/p_{\mathrm{data}}(z\mid x)\).

\begin{theorem}
Let \( \mathrm{Var}_{\mathrm{IWDD}} \) and \( \mathrm{Var}_{\mathrm{IPW}} \) denote the gradient covariance matrices under the IWDD loss (Equation~\ref{eq:iwdd}) and the IPW loss (Equation~\ref{eq:dd}), respectively. Then, the gradient variance under marginal sampling (IWDD) is upper bounded by that of the importance-weighted approach:
\[
\mathrm{Var}_{\emph{\text{IWDD}}} \preceq \mathrm{Var}_{\emph{\text{IPW}}}.
\]
\end{theorem}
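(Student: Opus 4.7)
The plan is to reduce the covariance comparison to a single quadratic-form inequality in the importance weight $w$, using the change-of-measure identity $w(x,z)\,p_{\mathrm{data}}(x,z)=p_{\mathrm{data}}(x)\,p_{\mathrm{rct}}(z)$, and then to close the positive-semidefiniteness step by exploiting the likelihood-ratio structure of $w$.

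First I would set $g(x,z)=\nabla_\theta D\bigl(q_\theta(y\mid x,z),\,p(y\mid x,z)\bigr)$, so that the single-sample stochastic gradients are $\hat g_{\mathrm{IPW}}=w(X,Z)\,g(X,Z)$ with $(X,Z)\sim p_{\mathrm{data}}(x,z)$ and $\hat g_{\mathrm{IWDD}}=g(X,Z)$ with $(X,Z)\sim p_{\mathrm{data}}(x)p_{\mathrm{rct}}(z)$. By Lemma~\ref{lemma:iwdd} these two estimators share the same expectation, so the covariance difference equals the difference of uncentered second moments. Applying the identity $w\,p_{\mathrm{data}}(x,z)=p_{\mathrm{data}}(x)p_{\mathrm{rct}}(z)$ to $\mathbb{E}_{p_{\mathrm{data}}}[w^2\,gg^\top]=\mathbb{E}_{p_{\mathrm{data}}(x)p_{\mathrm{rct}}(z)}[w\,gg^\top]$ collapses the comparison to
\[
\mathrm{Var}_{\mathrm{IPW}}-\mathrm{Var}_{\mathrm{IWDD}}=\mathbb{E}_{p_{\mathrm{data}}(x)p_{\mathrm{rct}}(z)}\!\bigl[(w(X,Z)-1)\,g(X,Z)g(X,Z)^\top\bigr].
\]

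The next step is to show that the right-hand side is PSD by testing against an arbitrary vector $v$ and verifying $\mathbb{E}[(w-1)(v^\top g)^2]\ge 0$. I would condition on $X$ and invoke the law of total variance: because the two estimators have identical conditional means $\sum_z p_{\mathrm{rct}}(z)\,g(X,z)$ given $X$, the outer between-$X$ variance contribution cancels, and only a per-$x$ conditional covariance comparison remains. Within each stratum $\{X=x\}$ I would then combine Cauchy--Schwarz with the exact identity $\mathbb{E}_{p_{\mathrm{rct}}(z)}[1/w(x,Z)]=1$ to bound the conditional covariance of $\hat g_{\mathrm{IWDD}}$ from above by that of $\hat g_{\mathrm{IPW}}$.

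The hard part will be closing this per-$x$ Loewner inequality in a fully general form: the factor $w(x,z)-1$ is not sign-definite on $\{z=0,1\}$, so a naive pointwise argument fails (when $\pi(x)>1/2$ the value $w(x,1)-1$ is negative). The delicate step is therefore to exploit the likelihood-ratio structure of $w$ (via the identity $\mathbb{E}[1/w]=1$) together with the rank-one form of $gg^\top$, so that each quadratic form $v^\top(\mathrm{Var}_{\mathrm{IPW}}-\mathrm{Var}_{\mathrm{IWDD}})v$ can be rewritten as a nonnegative combination of squares in the directions $v^\top g(x,0)$ and $v^\top g(x,1)$. This is where the analysis is most subtle, and where any implicit regularity tying $g(x,0)$ and $g(x,1)$ through the shared generator parameters would need to be invoked in order to certify the PSD ordering.
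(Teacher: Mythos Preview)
Your reduction to
\[
\mathrm{Var}_{\mathrm{IPW}}-\mathrm{Var}_{\mathrm{IWDD}}
=\mathbb{E}_{p_{\mathrm{data}}(x)p_{\mathrm{rct}}(z)}\bigl[(w(x,z)-1)\,g(x,z)g(x,z)^\top\bigr]
=\mathbb{E}_{p_{\mathrm{data}}(x,z)}\bigl[(w^2-w)\,gg^\top\bigr]
\]
is correct and coincides with the paper's starting point. Your subsequent observation---that the conditional means of the two estimators given $X$ agree, so only the within-$x$ conditional covariances need to be compared---is also correct and is a refinement the paper does not make explicit.

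Where you and the paper diverge is at the closing step. The paper asserts (main-text sketch) that $w^2-w=\tfrac{1}{4\pi(1-\pi)}-1\ge 0$ as a \emph{pointwise} identity in $(x,z)$, and in the appendix rewrites the expectation as $\mathbb{E}_{p_{\mathrm{data}}(x)}[\Delta(x)\,\Sigma(x)]$ with $\Delta(x)=\sum_z p_{\mathrm{data}}(z\mid x)(w^2-w)$ and $\Sigma(x)=\sum_z p_{\mathrm{data}}(z\mid x)\,gg^\top$. Both moves are erroneous: $\tfrac{1}{4\pi(1-\pi)}-1$ is the $z$-\emph{average} of $w^2-w$ under $p_{\mathrm{data}}(z\mid x)$, not its pointwise value, and the factorization $\sum_z p(z\mid x)(w^2-w)\,gg^\top=\Delta(x)\Sigma(x)$ is invalid because $(w^2-w)$ varies with $z$. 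You correctly note that $w(x,z)-1$ changes sign in $z$ whenever $\pi(x)\neq 1/2$.

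Your instinct that the step cannot be closed in full generality is in fact right, and the Cauchy--Schwarz route you sketch will not rescue it. Consider a single covariate value $x$ with $\pi(x)=0.9$, $g(x,0)=0$, $g(x,1)=v\neq 0$. The conditional difference becomes
\[
\sum_{z} p_{\mathrm{data}}(z\mid x)\bigl(w^2-w\bigr)gg^\top
=0.9\Bigl(\tfrac{25}{81}-\tfrac{5}{9}\Bigr)vv^\top
=-\tfrac{2}{9}\,vv^\top\prec 0,
\]
so $\mathrm{Var}_{\mathrm{IWDD}}\succ\mathrm{Var}_{\mathrm{IPW}}$ at this $x$; with $p_{\mathrm{data}}(x)$ a point mass this refutes the Loewner ordering globally. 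Intuitively, when the gradient is informative only at the treatment level that $p_{\mathrm{data}}$ already oversamples, importance weighting is \emph{less} noisy than forced balancing. Hence the ``implicit regularity tying $g(x,0)$ and $g(x,1)$'' you allude to is not merely delicate but necessary: the theorem as stated does not hold without an additional assumption on $g$, and neither your plan nor the paper's argument supplies one.
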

\vspace{-0.5em}
\begin{proof}[Proof sketch]
Let \( g(x, z) = \nabla_\theta D(q_{\theta}(y \mid x, z), p(y \mid x, z)) \), \( w(x, z) = \tfrac{p_{\mathrm{rct}}(z)}{p_{\mathrm{data}}(z \mid x)} \). We compare \( \hat{g}_{\mathrm{IPW}}(x,z) = w(x,z)\,g(x,z) \), with \( (x,z) \sim p_{\text{data}}(x,z) \), versus \( \hat{g}_{\mathrm{IWDD}}(x,z) = g(x,z) \), where \( x \sim p_{\text{data}}(x) \), \( z \sim p_{\text{rct}}(z) \).

$
\operatorname{Var}_{\text{IPW}}-\operatorname{Var}_{\text{IWDD}}
   =\mathbb{E}_{p_{\mathrm{data}}(x,z)}
      \bigl[(w^{2}-w)\,g\,g^\top\bigr]
   \;\succeq\;0,$
because \(w^{2}-w=\tfrac1{4\pi(1-\pi)}-1\ge0\) with
\(\pi=p_{\mathrm{data}}(z=1\mid x)\). A detailed variance analysis proof is in Appendix~\ref{appendix:gradient-variance}.\qedhere
\end{proof}
\vspace{-0.4em}
Importantly, the IWDD formulation in Equation~\ref{eq:iwdd} achieves \textbf{lower gradient variance} than the IPW-based objective in Equation~\ref{eq:dd}. By removing explicit dependence on inverse propensity scores, it avoids the instability associated with high-variance weights and leads to more efficient optimization.

\vspace{-0.2em}
\subsubsection{Radomization-based Adjustment}

When sampling from $p_{\text{data}}(x)$ and $p_{\text{rct}}(z)$, we propose a novel randomization-based adjustment to the data used for training the generator \( q_{\theta} \), which improves performance compared to standard distillation procedures. The motivation for this adjustment stems from the observation that the gold standard for estimating treatment effects is the RCTs. Our approach aims to approximate an RCT-like setting by breaking the dependence between covariates \( X \) and treatment assignment \( Z \). Specifically, randomly \textbf{shuffling \(X\)} can eliminate existing associations between \(X\) and \(Z\) while preserving their marginal distributions. However, to ensure a balanced treatment assignment across individuals, we instead \textbf{sample \(Z\)} independently from a \(\text{Bernoulli}(0.5)\) distribution.
This setup aligns with our goal of predicting both \(Y_0\) and \(Y_1\) for each individual, effectively mirroring the conditions of an RCT.

While past literature has emphasized the importance of randomized designs and balancing methods to reduce bias in observational studies \citep{imbens_rubin_2015, rosenbaum_central_1983, stuart2010matching}, our adjustment represents a novel randomization procedure specifically tailored for improving the use of diffusion models for generative causal estimation.

\vspace{-0.2em}
\subsubsection{Choice of Divergence}
\label{subsec:divergence_choice}

We consider two representative divergence measures: the KL divergence and the Fisher divergence. Under both, the gradient of the divergence loss \(\mathcal{L}_{\theta}\) can be estimated via an alternating optimization procedure between a fake score network and a generator. The fake score network is trained to approximate the score of the generated response variable \(y\) given \(x, z \sim p_{\text{data}}(x, z)\), 
while the generator is trained to optimize \(q_{\theta}(y \mid x, z)\), which will ultimately be used for causal estimation.

We first implemented KL divergence-based distillation following the formulation of prior work \citep{wang2023prolificdreamer, luo2023diffinstruct, yin2024onestep}. While this approach improves the sampling efficiency of the pretrained model, the one-step generator distilled using KL divergence fails to improve upon the original model and often performs worse, as shown in the second row of Figure~\ref{fig:toy}. Moreover, it exhibits training instability and is prone to collapse in our synthetic data experiments. In contrast, a Fisher divergence objective combined with SiD-based gradient estimation \citep{zhou2024score,chen2025denoising} results in more stable training and consistently stronger empirical performance. Thus, we adopt Fisher divergence as the distillation objective in IWDD and use SiD to optimize it, leading to a one-step generator that achieves both high sampling efficiency and strong causal estimation accuracy.

During distillation, we alternate between updating the generator \(G_{\theta}\) and the fake score network \(f_{\psi}\). The generator \(G_\theta\) is trained on randomized pairs \((\tilde{x}, \tilde{z})\) obtained via randomization-based adjustment to distill a pretrained diffusion model \(f_\phi\), originally trained on the joint distribution \(p_{\text{data}}(y, x, z)\). The fake score network is trained on unadjusted observational pairs \((x, z)\sim p_{\text{data}}(x, z)\) to approximate the score of the generator's output distribution. By applying randomization-based adjustment only to the generator inputs while keeping the fake score network conditioned on observational data, we generalize the approach of \citet{zhou2024score} and define the two loss functions as follows:

\vspace{-1em}
\begin{align}
\text{Generator loss:} \quad \mathcal{L}_{\theta} &= w(t)\, \left( f_{\psi}(y_t \mid \tilde{x}, \tilde{z}) - f_{\phi}(y_t \mid \tilde{x}, \tilde{z}) \right)^\top \left( f_{\psi}(y_t \mid \tilde{x}, \tilde{z}) - y_g \right) \nonumber \\
+ (1 - \alpha)&\, w(t)\, \left\| f_{\phi}(y_t \mid \tilde{x}, \tilde{z}) - f_{\psi}(y_t \mid \tilde{x}, \tilde{z}) \right\|_2^2, \quad \text{where } (\tilde{x}, \tilde{z}) \sim p_{\text{data}}(\tilde{x}) p_{\text{rct}}(\tilde{z}). \label{eq:gen_loss}
\end{align}
\vspace{-0.3em}
\begin{equation}
\text{Fake loss:}\quad \mathcal{L}_{\psi} = \gamma(t)\, \left\| f_{\psi}(y_t \mid x, z) - y_g \right\|_2^2, \quad \text{where } (x, z) \sim p_{\text{data}}(x, z).
\label{eq:score_loss}
\end{equation}
Here, \( y_g \) is the sample generated by the one-step generator \( G_\theta \), and \( y_t = y_g + \sigma_t \epsilon_t \); \( f_\phi \), the teacher diffusion model, is pretrained to estimate \( \mathbb{E}[y_0 \mid y_t, x, z] \) and kept frozen during distillation; \( f_\psi \), the fake score network, is trained to match \( \mathbb{E}[y_g \mid y_t, x, z] \).
The full algorithm is in Algorithm~\ref{alg:iwdd}. Detailed training schedules and the weighting functions \(w(t)\) and \(\gamma(t)\) are provided in Appendix~\ref{sec:iwdd}.

\begin{algorithm}[h]
\caption{IWDD Training}
\label{alg:iwdd}
\begin{algorithmic}[1]
\Require Pretrained diffusion $f_\phi$, training data $\mathcal{D} = \{({x}_i, z_i)\}_{i=1}^n$,  batch size $B$
\State \textbf{Initialize:} $\theta \leftarrow \phi, \; \psi \leftarrow \phi$
\Repeat
\State Sample mini-batch indices $\mathcal{I} \subset \{1, \dots, n\}$ with $|\mathcal{I}| = B$
  \State $x \gets \{x_i\}_{i \in \mathcal{I}}$; \quad $z \gets \{z_i\}_{i \in \mathcal{I}}$

\State $\tilde{x}\gets \texttt{shuffle}(x)$;
\quad $\tilde{z} \sim \mathrm{Bernoulli}(0.5)\;\; $\Comment{\textcolor{blue}{Randomization-based adjustment}}
\State $y_g \gets G_\theta(\tilde{x}, \tilde z, \varepsilon),~\varepsilon \sim \mathcal{N}(0, \mathbf{I})$, and let $y_t = y_g + \sigma_t\epsilon_t,~\epsilon_t \sim \mathcal{N}(0, \mathbf{I})$
\State $\theta \gets \theta - \eta_\theta \nabla_\theta \mathcal{L}_\theta(y_g,\epsilon_t,\tilde{{x}}, \tilde{z})$ \Comment{Equation~\eqref{eq:gen_loss}}
    \State $\psi \gets \psi - \eta_\psi \nabla_\psi \mathcal{L}_\psi(y_g,\epsilon_t, {x}, z)$ \Comment{Equation~\eqref{eq:score_loss}}

\Until{Converge}
\Ensure Trained generator $G_\theta$
\end{algorithmic}
\end{algorithm}

\vspace{-0.1em}
\section{Experiments}
We evaluate IWDD on both synthetic and benchmark datasets. The synthetic study illustrates the effect of divergence choice and highlights IWDD's robustness to covariate shift. We then benchmark IWDD on standard causal inference datasets, comparing its performance against baselines in potential outcome prediction and heterogeneous treatment effect estimation.

\textbf{Performance metrics.} We evaluate model performance on 
estimation accuracy for both PO prediction and treatment effects estimation. For accuracy of PO predictions, we report the \textbf{Root Mean Squared Error (RMSE)}, defined as
$\text{RMSE} = \sqrt{\frac{1}{N}\sum_{i = 1}^N(\hat{y}_i - y_i)^2},$
where \( \hat{y}_i \) and \( y_i \) denote the predicted and true outcomes, respectively. Lower RMSE values indicate better predictive performance. For treatment effect estimation, we use the \textbf{Precision in Estimation of Heterogeneous Effect (PEHE)}:
$\epsilon_{\text{PEHE}} = \sqrt{\frac{1}{N}\sum_{i = 1}^N (\hat{\tau}(x_i) - \tau(x_i))^2},$
where \( \tau(x) = \mathbb{E}[Y(1) - Y(0) \mid X = x] \) is the Conditional Average Treatment Effect (CATE). Lower $\epsilon_{\text{PEHE}}$ values reflect better estimation accuracy. For win rates, we report percentages, measuring how often a method outperforms others.

\subsection{Synthetic Data Example: Advantage of IWDD in Out-of-Sample Estimation}
\label{sec:synthetic-iwdd}
\begin{wrapfigure}{r}{0.3\textwidth}
    \centering
    \vspace{-15pt} %
    \includegraphics[width=0.28\textwidth]{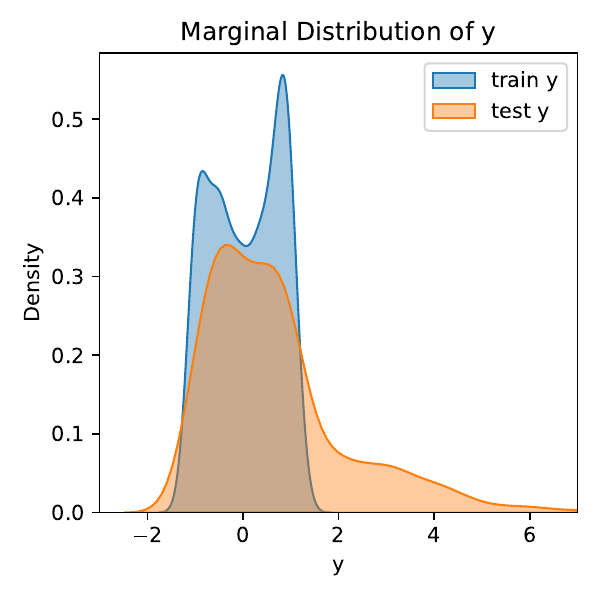}  %
    \vspace{-1\baselineskip}
    \caption{\small Marginal distributions of $y$ in training and testing sets. Due to the shift in treatment assignment, the induced distribution of $y$ differs across domains.}
    \vspace{-1em}
    \label{fig:density_y}
\end{wrapfigure}

We designed a synthetic data experiment to visualize the effectiveness of IWDD in addressing out-of-sample estimation challenges under distribution shift. The covariate distribution \( p(x) \sim \mathcal{N}(0, 1)\) and outcome model \(y = f(x, z) + \epsilon, \epsilon\sim \mathcal{N}(0,1)\) remain fixed across training and testing. The treatment assignment mechanism differs: in training, $z = \mathbf{1}\{x < -1\}$, resulting in only 16\% treated samples; in testing, \(z \sim \mathrm{Bernoulli}(0.5)\). Note that in testing, $z=1$ can occur when $x \ge -1$, creating $(x, z)$ pairs entirely unseen during training and contributing to the right tail of the marginal y distribution in Figure~\ref{fig:density_y}. This shift in $p(z\mid x)$ induces covariate shift, altering the marginal distribution of \(y\). We evaluate performance in two settings: in-sample (using the training distribution $p_{\text{data}}(x,z)$) and out-of-sample (under the test distribution $p_{\text{data}}(x)p_{\text{rct}}(z))$. Details of the data-generating process are in Appendix~\ref{appendix:toy-data}.

We denote the POs as $Y(0) = f(x, 0) + \epsilon$ %
and $Y(1) = f(x, 1) + \epsilon$, representing the untreated and treated outcomes. Figure~\ref{fig:toy} shows that the pretrained diffusion model (Row~1) performs well in-sample and on out-of-sample $Y(0)$.
However, it struggles with estimating out-of-sample $Y(1)$ due to the limited treated samples and unseen $(x, z)$ pairs in the training distribution, with an RMSE of 3.07. In contrast, IWDD (Row~3) improves estimation, with substantial gain observed in the out-of-sample $Y(1)$ predictions, reducing RMSE from 3.07 to 2.76. This demonstrates its effectiveness in addressing treatment imbalance and enhancing generalization under covariate shift.

We also implemented an alternative distillation strategy based on KL divergence \citep{yin2024onestep}. However, as shown in row 2 of Figure~\ref{fig:toy}, this approach not only fails to yield improvements but performs worse than the pretrained model. It is also prone to training instability and often collapses during optimization. Due to its superior empirical performance and robustness, we adopt Fisher divergence \citep{zhou2024score} as the preferred divergence in IWDD.

\begin{figure}[t] %
\centering
\renewcommand{\arraystretch}{1.5}

\begin{tabular}{@{}m{0.4cm} m{13cm}@{}}
& 
\makebox[13cm]{%
\hspace{-0.8cm} 
  $\overbrace{\hspace{5cm}}^{\text{In-sample}}%
  \hspace{1cm}%
   \overbrace{\hspace{5cm}}^{\text{Out-of-sample}}$
} \\[0.5ex]

\rotatebox{90}{Pretrain}  &
\includegraphics[width=\linewidth]{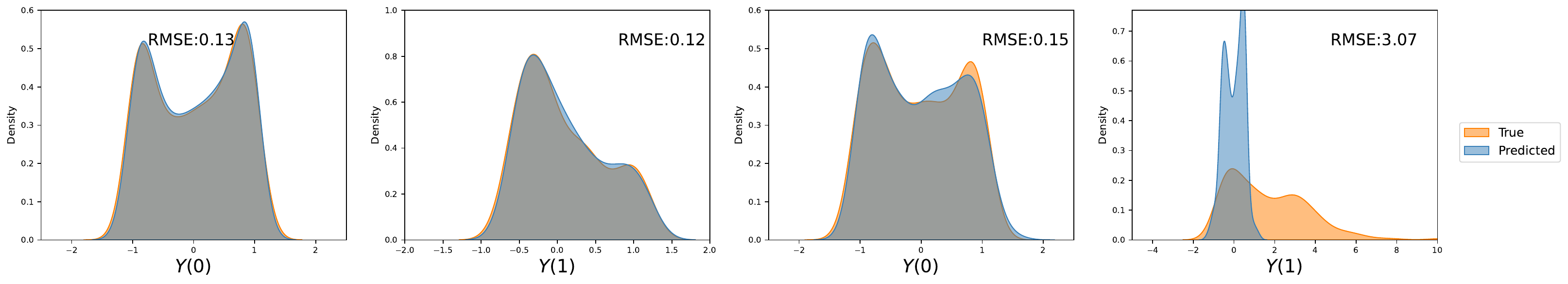} \\

\rotatebox{90}{\footnotesize \parbox{2.7cm}{\centering KL divergence\\ distillation}} &
\includegraphics[width=\linewidth]{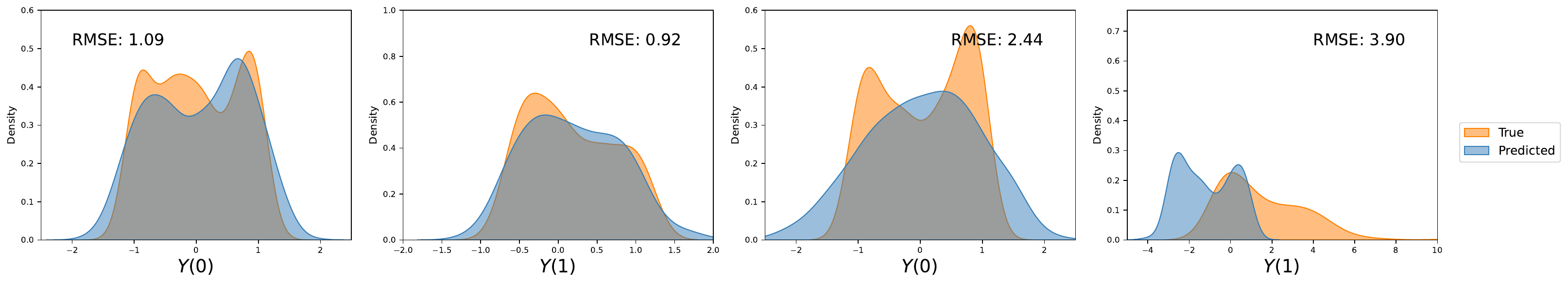} \\

\rotatebox{90}{\footnotesize \parbox{2.7cm}{\centering Fisher divergence\\ distillation (IWDD)}} &
\includegraphics[ width=\linewidth]{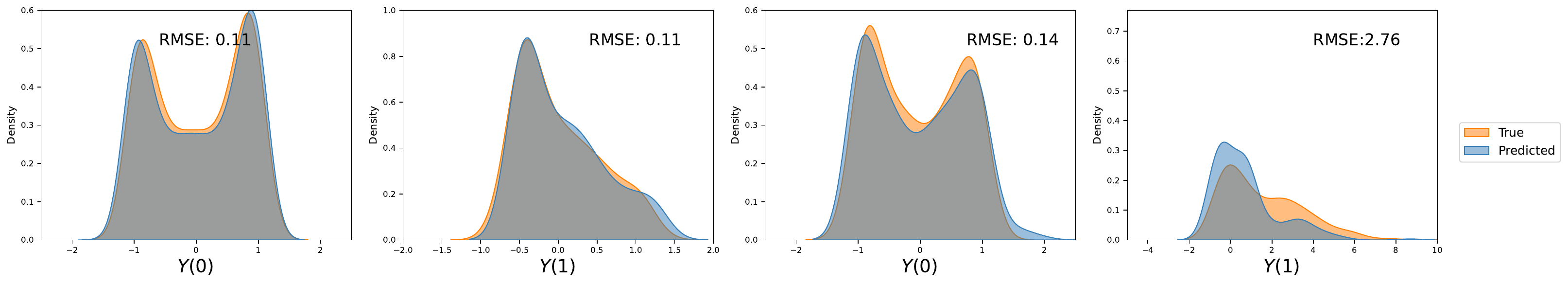}
\end{tabular}
\vspace{-5mm}
\caption{\small Synthetic data example: estimated potential outcome distributions $Y(0)$ and $Y(1)$ from different models. The pretrained diffusion model performs well in-sample and for $Y(0)$ out-of-sample, but struggles with $Y(1)$. IWDD improves estimation for out-of-sample $Y(1)$ while maintaining performance elsewhere.}%
\vspace{-3mm}
\label{fig:toy}
\end{figure}
\vspace{-0.3\baselineskip}

\subsection{Benchmarking on Standard Public Datasets}

\textbf{Datasets.} We evaluate our models on three widely used causal inference benchmarks. The \textbf{ACIC 2016} dataset includes 77 semi-synthetic datasets generated from real-world health care covariates, with 4802 observations and 55 covariates.\footnote{\url{https://jenniferhill7.wixsite.com/acic-2016/competition}} The \textbf{ACIC 2018} dataset consists of 12 semi-synthetic datasets with 10000 observations and 177 covariates, designed to test model robustness under complex treatment assignment and outcome mechanisms.\footnote{\url{https://www.synapse.org/\#!Synapse:syn11294478/wiki/}} The \textbf{IHDP}~\citep{gross2024ihdp} dataset contains 747 units and 25 covariates, and is based on a real randomized trial with simulated outcomes.

\textbf{Baselines.} While most of these methods are designed for CATE estimation, they can also be evaluated for PO prediction by assessing the RMSE of predicted outcomes \( Y(0) \) and \( Y(1) \). In our evaluations, we compare against S-learner and T-learner \citep{K_nzel_2019}, TARNet and CFR \citep{curth2021nonparametric}, GANITE \citep{yoon2018ganite}, and DiffPO \citep{ma2024diffpo} for potential outcome prediction, and further include DR-learner \citep{kennedy2023optimaldoublyrobustestimation}, RA-learner \citep{curth2021nonparametric}, and TEDVAE \citep{Zhang_Liu_Li_2021} for CATE estimation.

\textbf{Results and Discussion.}
\phantomsection \label{para:results} Results for ACIC 2018 are summarized in Tables~\ref{tab:rmse-summary-acic2018} and~\ref{tab:pehe-summary-acic2018}. Additional experiments were conducted on ACIC 2016 and IHDP. Results for ACIC 2016, including PO predictions (Table~\ref{tab:rmse-acic2016}) and treatment effect estimation (Table~\ref{tab:acic2016-pehe}), are provided in the Appendix~\ref{app:acic2016_exp_results}. Complete per-dataset results for ACIC 2018 are shown in Tables~\ref{tab:rmse-compact} and~\ref{tab:acic2018-pehe} (Appendix~\ref{app:acic2018_exp_results}). IHDP results are summarized in Tables~\ref{tab:ihdp_rmse} and~\ref{tab:ihdp_pehe} (Appendix~\ref{app:ihdp_exp_results}).

\begin{table}[t] %
\centering
\caption[\small Win rates and RMSE summary]{Win rates (\%) \footnotemark{} and mean RMSE for $Y(0)$, $Y(1)$ across 12 ACIC 2018 datasets. The best result across all methods is highlighted in \textbf{bold}, and the best result among the diffusion-based approaches (DiffPO, Pretrain, IWDD) is additionally marked with a $\star$.}
\label{tab:rmse-summary-acic2018}
\resizebox{.9\textwidth}{!}{
\begin{tabular}{lcccccccc}
\toprule
\textbf{Method} & \multicolumn{4}{c}{In-sample} & \multicolumn{4}{c}{Out-of-sample} \\
\cmidrule(lr){2-5} \cmidrule(lr){6-9}
& Win$_0$(\%) & RMSE$_0$ & Win$_1$(\%) & RMSE$_1$ & Win$_0$(\%) & RMSE$_0$ & Win$_1$(\%) & RMSE$_1$ \\
\midrule
T-learner     & 0 & 65.9 & 8.3 & 68.2 & 0 & 68.2 & 8.3 & 307.3 \\
S-learner     & 8.3 & 64.5 & 0 & 65.7 & 8.3 & 275.9 & 0 & 306.8 \\
TNet          & 0 & 0.857 & 0 & 0.973 & 8.3 & 1.018 & 0 & 1.120 \\
TARNet        & 0 & 0.855 & 0 & 0.967 & 0 & 1.022 & 0 & 1.113 \\
OffsetNet     & \textbf{75} & \textbf{0.786} & \textbf{33.3} & \textbf{0.889} & 0 & 1.114 & 0 & 1.218 \\
FlexTENet     & 8.3 & 0.876 & 16.7 & 0.987 & 0 & 1.058 & 0 & 1.149 \\
\cmidrule(lr){1-9}
DiffPO        & 8.3$^\star$ & 473.9 & 25$^\star$ & 975.2 & 16.7 & 472.7 & 25 & 975.2 \\
Pretrain      & 0 & 0.989 & 0 & 1.170 & 0 & 0.902 & 0 & 1.085 \\
\textbf{IWDD}          & 0 & 0.963$^\star$ & 16.7 & 1.038$^\star$ & \textbf{66.7} & \textbf{0.880}$^\star$ & \textbf{75} & \textbf{0.950}$^\star$ \\
\bottomrule
\end{tabular}}
\vspace{-3mm}
\end{table}
\footnotetext{Ties are counted for both methods when calculating win rates.}

The ACIC 2018 results reveal several key insights. First, IWDD consistently dominates the out-of-sample evaluations, both in PO prediction and CATE estimation, demonstrating strong generalization ability to unseen data. Although OffsetNet and FlexTENet \citep{curth2021inductivebiasesheterogeneoustreatment} perform well in-sample, they show weaker performance in the out-of-sample setting. This is due to how they are designed. OffsetNet uses a hard reparametrization approach, modeling the treatment effect as an additive offset to $\mathbb{E}[Y(0)\mid X]$, explicitly fitting the heterogeneity between POs. FlexTENet employs a multi-task learning architecture with shared and private subspaces to balance common and outcome-specific patterns. Both methods risk overfitting, limiting their ability to generalize. In contrast, IWDD generalizes well across diverse datasets by IPW-modulated distillation through randomization-based adjustment.

\begin{wraptable}{r}{0.55\textwidth}
\centering
\small
\vspace{-5mm}
\caption{\small Win rates (\%) and mean $\epsilon_{\text{PEHE}}$ on ACIC 2018}
\label{tab:pehe-summary-acic2018}
\resizebox{.53\textwidth}{!}{
\begin{tabular}{lcccc}
\toprule
 & \multicolumn{2}{c}{In-sample} & \multicolumn{2}{c}{Out-of-sample} \\
\cmidrule(lr){2-3} \cmidrule(lr){4-5}
& Win(\%) &  $\epsilon_{\text{PEHE}}$ & Win(\%) & $\epsilon_{\text{PEHE}}$ \\
\midrule
Causal Forest & 0\% & 13.452 & 8\% & 18.190 \\
T-learner     & 8\% & 15.175 & 8\% & 18.724 \\
S-learner     & 0\% & 8.613  & 0\% & 13.619 \\
TNet          & 0\% & 0.600  & 0\% & 0.574 \\
TARNet        & 0\% & 0.618  & 0\% & 0.619 \\
OffsetNet     & 0\% & 0.606  & 0\% & 0.606 \\
FlexTENet     & 0\% & 0.652  & 0\% & 0.646 \\
DRNet         & 0\% & 0.607  & 0\% & 0.606 \\
\cmidrule(lr){1-5}
DiffPO        & 33\% & 583.4  & 33\% & 589.1 \\
Pretrain      & 25\% & 0.318  & 33\% & 0.306 \\
\textbf{IWDD} & \textbf{50\%} & \textbf{0.308} & \textbf{58\%} & \textbf{0.301} \\
\bottomrule
\end{tabular}}
\vspace{-2mm}
\end{wraptable}

Second, IWDD shows numerical stability across all evaluated datasets, while other baselines exhibit varying levels of instability on different datasets. Notably, DiffPO is highly unstable: although it performs well on some ACIC 2018 datasets, it fails on all ACIC 2016 and IHDP datasets, and is thus excluded from those results due to lack of comparability\footnote{We use the official GitHub repository of DiffPO to produce the results: \url{https://github.com/yccm/DiffPO}.}. Although propensity weights, which should always exceed~1, are clipped to 0.9 for numerical stability, its DDPM-based schedules remains unstable in some settings. The EDM schedules we adopt, however, contributes to robustness. Importantly, our pretrained diffusion model already provides a strong and robust baseline, and IWDD further improves upon it, achieving the best overall performance.

\section{Discussion}
\label{sec:discussion}
This work introduces IWDD, a generative causal estimation framework that integrates diffusion models, importance weighting, and distillation to address covariate imbalance while enabling efficient one-step sampling. Empirical results on synthetic and real-world datasets demonstrate its robustness and strong generalization. Despite its strengths, IWDD has several limitations. It assumes no unmeasured confounding~\citep{he2024generalizing}—a strong assumption—and has so far been evaluated primarily on benchmark datasets. Future work will extend IWDD to more complex settings, including continuous treatments, discrete outcomes, and longitudinal data, while also exploring ways to relax identification assumptions and improve training efficiency.

{\small
\bibliographystyle{plainnat}  %
\bibliography{references.bib,zhougroup_ref.bib}     %
}

\newpage

\appendix

\section{Gradient Variance Comparison for IPW-based and IWDD Losses}
\label{appendix:gradient-variance}
We have
\begin{align}
  \mathcal{L}_{\theta}^{\text{IPW}}  = \mathbb{E}_{(x, z) \sim p_{\text{data}}(x, z)} \left[ w(x, z) \cdot D\left(q_{\theta}(y \mid x, z),\, p(y \mid x, z)\right) \right], 
\end{align}
and
\begin{align}
    \mathcal{L}_{\theta}^{{\text{IWDD}}} = \mathbb{E}_{(x, z) \sim p_{\text{data}}(x)p_{\text{rct}}(z)} \left[ D\left(q_{\theta}(y \mid x, z),\, p(y \mid x, z)\right) \right]. 
\end{align}

Let
\[
g(x,z) \;=\;\nabla_\theta\,D\bigl(\,q_\theta(y\mid x,z),p(y\mid x,z)\bigr),
\]
and the importance weight
\[
w(x,z)\;=\;\frac{p_{\text{rct}}(z)}{p_{\text{data}}(z\mid x)}
          \;=\;
          \begin{cases}
            \dfrac{\;1/2\;}{\pi(x)}        & z=1,\\[6pt]
            \dfrac{\;1/2\;}{1-\pi(x)}      & z=0.
          \end{cases}
\]

with $ \pi(x)\;:=\;p_{\text{data}}(z=1\mid x)$ denoting the propensity score, which by Assumption~\ref{assump: consist} satisfies $0<\pi(x)<1$.
We compare two stochastic‐gradient estimators:

1. Importance‐weighted using propensity score (sampling \((x,z)\sim p_{\rm data}(x,z)\), then weighting):
\[
\hat g_{\rm IPW}(x,z)=w(x,z)\,g(x,z).
\]

2. Marginal‐sampling (sampling \((x,z)\sim p_{\rm data}(x)\,p_{\rm rct}(z)\), no weight):
\[
\hat g_{\rm IWDD}(x,z)=g(x,z).
\]

In both cases the population gradient is

\[
G \;=\;\nabla_\theta\mathcal L_\theta
=\mathbb{E}_{p_{\rm data}(x,z)}[\,w\,g\,]
=\mathbb{E}_{p_{\rm data}(x)\,p_{\rm rct}(z)}[\,g\,].
\]

Variance under importance weighting:
\begin{align*}
\mathrm{Var}_{\rm IPW}
&=\mathbb{E}_{p_{\rm data}(x,z)}\bigl[\hat g_{\rm IPW}\,\hat g_{\rm IPW}^\top\bigr]
\;-\;G\,G^\top\\
&=\mathbb{E}_{p_{\rm data}(x,z)}\bigl[w(x,z)^2\,g(x,z)\,g(x,z)^\top\bigr]
\;-\;
\Bigl(\mathbb{E}_{p_{\rm data}(x,z)}\bigl[w(x,z)\,g(x,z)\bigr]\Bigr)
\Bigl(\mathbb{E}_{p_{\rm data}(x,z)}\bigl[w(x,z)\,g(x,z)\bigr]\Bigr)^\top.
\end{align*}

Variance under marginal sampling:

Since for any test function \(h\),
\[
\mathbb{E}_{p_{\rm data}(x)\,p_{\rm rct}(z)}[h(x,z)]
=\mathbb{E}_{p_{\rm data}(x,z)}\bigl[w(x,z)\,h(x,z)\bigr],
\]
we have
\begin{align*}
\mathrm{Var}_{\rm IWDD}
&=\mathbb{E}_{p_{\rm data}(x)\,p_{\rm rct}(z)}\bigl[g(x,z)\,g(x,z)^\top\bigr]
\;-\;G\,G^\top\\
&=\mathbb{E}_{p_{\rm data}(x,z)}\bigl[w(x,z)\,g(x,z)\,g(x,z)^\top\bigr]
\;-\;
\Bigl(\mathbb{E}_{p_{\rm data}(x,z)}\bigl[w(x,z)\,g(x,z)\bigr]\Bigr)
\Bigl(\mathbb{E}_{p_{\rm data}(x,z)}\bigl[w(x,z)\,g(x,z)\bigr]\Bigr)^\top.
\end{align*}

Comparing the two variances:
\begin{align}
\mathrm{Var}_{\rm IPW}
-\mathrm{Var}_{\rm IWDD}
=\mathbb{E}_{p_{\rm data}(x,z)}\bigl[(w^2-w)\;g\,g^\top\bigr],
\label{eq:var_gap_main}
\end{align}

Condition on \(x\) and write \(\Sigma(x)=\sum_{z}p_{\text{data}}(z\mid x)\,
g(x,z)g(x,z)^{\!\top}\succeq0\).
With \(\pi(x)=p_{\text{data}}(z=1\mid x)\) abbreviating the propensity
score, the inner expectation in Equation~\ref{eq:var_gap_main} becomes

\[
\begin{aligned}
\Delta(x)
  &:=\sum_{z=0}^{1}p_{\text{data}}(z\mid x)\,
       \bigl(w^2(x,z)-w(x,z)\bigr) \\[4pt]
  &=\pi(x)\!
     \Bigl[\bigl(\tfrac12/\pi(x)\bigr)^{\!2}-\tfrac12/\pi(x)\Bigr]
    +\bigl[1-\pi(x)\bigr]\!
     \Bigl[\bigl(\tfrac12/\!\bigl(1-\pi(x)\bigr)\bigr)^{\!2}
           -\tfrac12/\!\bigl(1-\pi(x)\bigr)\Bigr] \\[6pt]
  &=\frac{1}{4\,\pi(x)\bigl[1-\pi(x)\bigr]}-1
  \;\;\ge\;0,
\end{aligned}
\]
with equality \((\Delta(x)=0)\) \emph{iff} \(\pi(x)=\tfrac12\)
(observational data already perfectly balanced at that \(x\)).

Therefore
\[
\operatorname{Var}_{\text{IPW}}-\operatorname{Var}_{\text{IWDD}}
   \;=\;
   \mathbb{E}_{p_{\text{data}}(x)}
      \!\bigl[\,\Delta(x)\,\Sigma(x)\bigr]
   \;\succeq\;0,
\]
because each matrix \(\Sigma(x)\) is positive–semidefinite and
\(\Delta(x)\ge0\).

\section{Toy Data Generation Setup}
\label{appendix:toy-data}

We design a synthetic data generating mechanism to evaluate causal estimation methods under covariate shift, with a particular focus on the generalization of treatment effect estimation from observational (confounded) settings to randomized (unconfounded) settings. 
In our toy setup, covariate shift arises from changes in the treatment assignment mechanism: the covariate distribution \( p(x) \) is fixed across domains, but the conditional treatment distribution \( p(z \mid x) \) differs between training and test environments. 
This setup is conceptually related to the toy example designed for studying classical covariate shift scenarios in supervised learning \citep{sugiyama2007covariate}, where the input distribution \( p(x) \) changes for training and testing while the conditional outcome model \( p(y \mid x) \) remains invariant.

Let $x \in \mathbb{R}$ denote a scalar covariate sampled identically across both training and test environments from a standard normal distribution, $i.e.$, $x \sim \mathcal{N}(0, 1)$. The treatment assignment mechanism, however, differs between the training and test datasets.

In the training data, treatment is assigned deterministically based on the covariate:
\[
z = \mathbf{1}\{x < -1\},
\]
where $\mathbf{1}\{\cdot\}$ is the indicator function. This deterministic rule induces strong confounding, as treatment assignment is a function of $x$. This setup mimics observational studies in which patients with lower health scores or greater severity are more likely to receive treatment.

In contrast, the test data simulates a randomized controlled trial (RCT) setting, where treatment assignment is independent of covariates:
\[
z \sim \mathrm{Bernoulli}(0.5).
\]
This shift in treatment mechanism introduces a covariate distribution mismatch between $p_{\text{train}}(z \mid x)$ and $p_{\text{test}}(z \mid x)$, despite $p(x)$ remaining unchanged.

The outcome variable $y$ is generated from a nonlinear structural equation that depends on both the covariate $x$ and the treatment $z$:
\[
y = \sin(2x) + z \cdot \exp(x) + \epsilon, \quad \epsilon \sim \mathcal{N}(0, 0.01).
\]
The function $\sin(2x)$ introduces bounded nonlinear variation in the baseline outcome, while the multiplicative term $z \cdot \exp(x)$ captures heterogeneous treatment effects that increase exponentially with the covariate $x$. The additive noise term $\epsilon$ introduces mild stochasticity, simulating natural outcome variability.

This data generating process encapsulates several key challenges in real-world causal inference: (i) covariate-dependent confounding in observational data, (ii) covariate shift between observational and experimental domains, (iii) heterogeneous treatment effects, and (iv) nonlinear outcome surfaces. As such, it can be used for evaluating the robustness and generalization ability of causal inference methods under distributional mismatch.

\section{Ablation Study and Parameter Settings}
We conduct an ablation study to investigate the impact of the hyperparameter \( \alpha \) on the performance of IWDD. Tables~\ref{tab:diff_alpha} summarize results across a range of \( \alpha \in [0.3, 1.2] \) for the IHDP and ACIC 2018 datasets, respectively. 

On the IHDP benchmark, model performance is relatively stable for \( \alpha \in [0.3, 0.7] \), with lowest RMSE and Wasserstein distances achieved around \( \alpha = 0.7 \), while larger values such as \( \alpha = 1.2 \) lead to slightly degraded metrics. PEHE remains consistent across all \( \alpha \) values.

Similarly, on ACIC 2018, RMSE are minimized when \( \alpha \in [0.6, 1.2] \), particularly peaking at \( \alpha = 0.7 \) and \( \alpha = 1.2 \), whereas values below 0.5 or exactly at 1.0 show inferior results. Interestingly, PEHE shows very little sensitivity to \( \alpha \), remaining around 0.300 across the board.

Based on these observations, we use \( \alpha = 0.7 \) for overall robustness, though task-specific tuning between \( \alpha = 0.5 \) and \( \alpha = 1.2 \) may further optimize performance.

\begin{table}[t]
\centering

\vspace{1em}
\caption{Average out-of-sample evaluation results for the IWDD algorithm under different values of \( \alpha \) on IHDP (10 datasets) and ACIC 2018 (12 datasets)}
\label{tab:diff_alpha}

\vspace{1em}
\small %
\begin{tabular}{c ccc ccc}
\toprule
\multirow{2}{*}{\textbf{\( \alpha \)}} 
& \multicolumn{3}{c}{\textbf{IHDP}} 
& \multicolumn{3}{c}{\textbf{ACIC 2018}} \\
\cmidrule(lr){2-4} \cmidrule(lr){5-7}
& RMSE$_{y_0}$ & RMSE$_{y_1}$ & PEHE
& RMSE$_{y_0}$ & RMSE$_{y_1}$ & PEHE \\
\midrule
0.3 & 1.197 & 1.115 & 1.645 & 0.947 & 1.204 & 0.300 \\
0.4 & 1.126 & 1.042 & 1.644 & 0.923 & 1.152 & 0.301 \\
0.5 & 1.162 & 1.065 & 1.643 & 0.945 & 1.185 & 0.302 \\
0.6 & 1.399 & 1.305 & 1.643 & 0.895 & 1.019 & 0.300 \\
0.7 & 1.056 & 0.958 & 1.644 & 0.874 & 1.019 & 0.299 \\
1.0 & 1.193 & 1.090 & 1.643 & 1.082 & 1.209 & 0.300 \\
1.2 & 1.236 & 1.154 & 1.642 & 0.889 & 0.986 & 0.301 \\
\bottomrule
\end{tabular}
\normalsize %
\vspace{1em}
\end{table}

\section{Implementation Details}

\subsection{IWDD}
\label{sec:iwdd}

We implemented IWDD in PyTorch and conducted experiments on an NVIDIA RTX A5000 GPU. Below, we report the default settings of our model, though some hyperparameters may require minor tuning depending on the dataset.

\paragraph{Pretraining diffusion model}

The EDM loss is:
\[
\mathcal{L}_\phi = \mathbb{E}_{\sigma, y, {n}}\left[\lambda(\sigma)\|D_{\phi}(y_t ; \sigma, x, z)-{y}\|_2^2\right],
\]
where
$D_{\phi}(y;\sigma,x,z)=
    c_{\text{skip}}(\sigma)\,y_t \;+\;
    c_{\text{out}}(\sigma)\,
    f_{\phi}\!\bigl(c_{\text{in}}(\sigma)\,y_t,\,
                   c_{\text{noise}}(\sigma),x,z\bigr).$

We incorporate the diffusion scheduling approach from EDM \citep{karras2022edm}. We adopt the same network architecture and preconditioning scheme as EDM, including input scaling $c_{\text{in}}(\sigma) = 1 / \sqrt{\sigma^2 + \sigma^2_{\text{data}}}$, output scaling $c_{\text{out}}(\sigma) = \sigma \cdot \sigma_{\text{data}} / \sqrt{\sigma^2 + \sigma^2_{\text{data}}}$, skip scaling $c_{\text{skip}}(\sigma) = \sigma^2_{\text{data}} / (\sigma^2 + \sigma^2_{\text{data}})$, and noise conditioning $c_{\text{noise}}(\sigma) = \ln(\sigma)$. All relevant hyperparameters, including $\sigma_{\text{min}}$, $\sigma_{\text{max}}$, $\rho$, and $\mathcal{P}_{\text{mean}}$, are adopted from the EDM default configuration.

For data preprocessing, we followed the approach used in DiffPO (\url{https://github.com/yccm/DiffPO}). To guide the model, we used three causal masks as inputs: observational ($m_o$), target ($m_t$), and conditional ($m_c$) masks. $m_o$ indicates available observational data, $m_c$ marks conditioning variables $x$ and $a$, and $m_t$ marks observed outcomes $y$. The loss is computed only where $m_t = 1$.

\paragraph{Distillation} The distillation phase used hyperparameters consistent with the SiD implementation \citep{zhou2024score}. We have the loss functions for generator Eq.~\ref{eq:gen_loss} and fake score network Eq.~\ref{eq:score_loss}:
The weighting function $w(t)$ is defined as:
\[
w(t) = C/{\left\| y_g - f_\phi(y_t, t) \right\|_{1, \text{sg}}},
\]
where $y_t = y_g+\sigma_t\epsilon_t$, $C$ is the normalization constant in the structured setting, and $\| \cdot \|_{1,\text{sg}}$ denotes the stop-gradient L1 norm. The same function is used for $\gamma(t)$, following \citep{karras2022edm}.

The same hyperparameters are used as in the SiD implementation \citep{zhou2024score} are used. At each step, we sample $t \sim \text{Unif}[0, t_{\max}/1000]$ with $t_{\max} \in [0, 1000]$ and define the noise level using the $\rho$-parameterized EDM schedule:
\[
\sigma_t = \left(\sigma_{\text{max}}^{1/\rho} + (1 - t)\left(\sigma_{\text{min}}^{1/\rho} - \sigma_{\text{max}}^{1/\rho}\right)\right)^\rho,
\]
where $\sigma_{\text{min}} = 0.002$, $\sigma_{\text{max}} = 80$, and $\rho = 7.0$.

In the generation procedure $y_g = G_\theta(\sigma_{\text{init}},x, z, \epsilon), \quad \epsilon \sim \mathcal{N}(0, I)$,  $\sigma_{\text{init}}$ is set to $2.5$ and remains fixed throughout distillation and evaluation.

\subsection{Baselines}

We compared IWDD against several baselines implemented using publicly available codebases. DiffPO was included with its official implementation and default hyperparameters (\url{https://github.com/yccm/DiffPO}). CATENets-based estimators---including S-learner, T-learner, DR-learner, RA-learner, TNet, TARNet, OffsetNet, and FlexTENet---were adopted without modification from the CATENets repository (\url{https://github.com/AliciaCurth/CATENets/tree/main}). GANITE \citep{yoon2018ganite}, a generative-adversarial baseline for counterfactual prediction, was implemented using the MLforHealthLab repository (\url{https://github.com/vanderschaarlab/mlforhealthlabpub/tree/main/alg/ganite}). All baseline models were trained and evaluated using the same data splits, preprocessing pipelines, and evaluation metrics as IWDD.

\section{Experiments Results}
\label{app:experiments}

\subsection{ACIC 2016}
\label{app:acic2016_exp_results}

We present full potential outcome prediction results for ten selected datasets from the 77 ACIC 2016 datasets (Table~\ref{tab:rmse-acic2016}) and provide detailed treatment effect estimation results for three representative datasets (Table~\ref{tab:acic2016-pehe}).

\begin{table}[htbp]
\centering
\scriptsize
\caption{RMSE for POs $Y(0)$ and $Y(1)$ (in-sample and out-of-sample) across 10 ACIC 2016 datasets. The best result across all methods is highlighted in \textbf{bold}.}
\label{tab:rmse-acic2016}
\begin{tabular}{lcccccccccc}
\toprule
& \multicolumn{4}{c}{Dataset 1} & \multicolumn{4}{c}{Dataset 2} \\
\cmidrule(lr){2-5} \cmidrule(lr){6-9}
& RMSE$_{0,\text{in}}$ & RMSE$_{0,\text{out}}$ & RMSE$_{1,\text{in}}$ & RMSE$_{1,\text{out}}$
& RMSE$_{0,\text{in}}$ & RMSE$_{0,\text{out}}$ & RMSE$_{1,\text{in}}$ & RMSE$_{1,\text{out}}$ \\
\midrule
TNet         & 1.3069 & 1.4667 & 1.7358 & 1.8146 & 2.552 & 2.684 & 4.717 & 4.786 \\
TNet\_reg    & 1.3157 & 1.4827 & 1.5979 & 1.6294 & 2.442 & 2.448 & 3.295 & 3.210 \\
TARNet       & 1.2697 & 1.4360 & 1.4534 & 1.5463 & 2.343 & 2.320 & 2.941 & 2.872 \\
TARNet\_reg  & 1.2481 & 1.4148 & 1.3863 & 1.5026 & 2.251 & 2.213 & 2.607 & 2.578 \\
OffsetNet    & 1.2469 & 1.4302 & 1.4231 & 1.5514 & 2.229 & 2.198 & 2.409 & 2.306 \\
FlexTENet    & 1.1800 & 1.3420 & 1.2402 & 1.3453 & 2.140 & 1.995 & 2.522 & 2.388 \\
FlexTENet\_noortho & 1.3290 & 1.5117 & 1.5809 & 1.6845 & 2.377 & 2.415 & 3.004 & 2.932 \\
Pretrain     & 0.954  & 0.981  & 1.007  & 1.300  & 0.855 & 1.028 & 1.302 & 1.414 \\
IWDD         & \textbf{0.952} & \textbf{0.938} & \textbf{0.997} & \textbf{1.153} & \textbf{0.846} & \textbf{1.023} & \textbf{1.171} & \textbf{1.274} \\
\midrule
& \multicolumn{4}{c}{Dataset 3} & \multicolumn{4}{c}{Dataset 4} \\
\cmidrule(lr){2-5} \cmidrule(lr){6-9}
& RMSE$_{0,\text{in}}$ & RMSE$_{0,\text{out}}$ & RMSE$_{1,\text{in}}$ & RMSE$_{1,\text{out}}$
& RMSE$_{0,\text{in}}$ & RMSE$_{0,\text{out}}$ & RMSE$_{1,\text{in}}$ & RMSE$_{1,\text{out}}$ \\
\midrule
TNet         & 0.8678 & 1.0245 & 1.5526 & 1.6606 & 2.3082 & 2.2645 & 3.1790 & 3.8997 \\
TNet\_reg    & 0.8422 & 1.0140 & 1.2557 & 1.3984 & 2.2534 & 2.2559 & 2.3215 & 3.1703 \\
TARNet       & 0.8283 & 0.9890 & 1.2168 & 1.3560 & 2.1608 & 2.1516 & 2.1011 & 2.9622 \\
TARNet\_reg  & 0.7996 & 0.9720 & 1.0432 & 1.2079 & 2.0858 & 2.0906 & 2.0797 & 2.9229 \\
OffsetNet    & 0.7968 & 0.9852 & \textbf{0.9391} & \textbf{1.1258} & 2.1010 & 2.1452 & 2.1386 & 3.0270 \\
FlexTENet    & \textbf{0.7776} & 0.9690 & 1.1019 & 1.2620 & 2.0541 & 2.0146 & 2.2698 & 3.2725 \\
FlexTENet\_noortho & 0.8699 & 1.0731 & 1.2849 & 1.4463 & 2.1818 & 2.2343 & 2.3238 & 3.2638 \\
Pretrain     & 0.963  & \textbf{0.868}  & 1.753  & 1.428  & 0.918  & 1.001  & 1.090  & 0.979 \\
IWDD         & 0.954 & 0.871 & 1.735 & 1.250 & \textbf{0.909} & \textbf{0.986} & \textbf{1.132} & \textbf{0.890} \\
\midrule
& \multicolumn{4}{c}{Dataset 5} & \multicolumn{4}{c}{Dataset 6} \\
\cmidrule(lr){2-5} \cmidrule(lr){6-9}
& RMSE$_{0,\text{in}}$ & RMSE$_{0,\text{out}}$ & RMSE$_{1,\text{in}}$ & RMSE$_{1,\text{out}}$
& RMSE$_{0,\text{in}}$ & RMSE$_{0,\text{out}}$ & RMSE$_{1,\text{in}}$ & RMSE$_{1,\text{out}}$ \\
\midrule
TNet         & 1.834  & 2.081  & 2.747  & 2.824  & 1.212  & 1.481  & 2.564  & 2.676 \\
TNet\_reg    & 1.941  & 2.211  & 2.296  & 2.420  & 1.245  & 1.434  & 1.931  & 2.039 \\
TARNet       & 1.953  & 2.229  & 2.122  & 2.278  & 1.218  & 1.416  & 1.850  & 1.905 \\
TARNet\_reg  & 1.921  & 2.208  & 1.977  & 2.187  & 1.159  & 1.392  & 1.651  & 1.770 \\
OffsetNet    & 1.915  & 2.260  & 2.035  & 2.238  & 1.139  & 1.392  & 1.545  & 1.719 \\
FlexTENet    & 1.747  & 2.021  & 1.797  & 1.961  & 1.094  & 1.325  & 1.623  & 1.725 \\
FlexTENet\_noortho & 2.014  & 2.329  & 2.134  & 2.363  & 1.252  & 1.465  & 1.963  & 2.030 \\
Pretrain     & 0.907  & \textbf{1.235}  & \textbf{0.991}  & 1.058  & 0.964 & \textbf{0.993} & 1.194 &  \textbf{1.290} \\
IWDD         & \textbf{0.875} & 1.305 & 1.007 & \textbf{1.031} & \textbf{0.962} & 1.027 & \textbf{1.189} &1.384 \\
\midrule
& \multicolumn{4}{c}{Dataset 7} & \multicolumn{4}{c}{Dataset 26} \\
\cmidrule(lr){2-5} \cmidrule(lr){6-9}
& RMSE$_{0,\text{in}}$ & RMSE$_{0,\text{out}}$ & RMSE$_{1,\text{in}}$ & RMSE$_{1,\text{out}}$
& RMSE$_{0,\text{in}}$ & RMSE$_{0,\text{out}}$ & RMSE$_{1,\text{in}}$ & RMSE$_{1,\text{out}}$ \\
\midrule
T-learner    & 2.328 & 1.971 & 3.233 & 2.827 & 2.395 & 2.854 & 2.540 & 3.695 \\
S-learner    & 2.301 & 1.956 & 3.429 & 3.006 & 2.408 & 2.883 & 2.523 & 3.716 \\
TNet         & 2.075 & 2.534 & 3.548 & 4.173 & 2.279 & 3.985 & 3.479 & 3.932 \\
TNet\_reg    & 1.496 & 3.231 & 2.853 & 4.074 & 2.081 & 3.807 & 3.052 & 3.644 \\
TARNet       & 1.486 & 3.204 & 2.754 & 3.983 & 1.899 & 3.858 & 2.576 & 3.308 \\
TARNet\_reg  & 1.459 & 3.172 & 2.633 & 3.874 & 1.875 & 3.840 & 2.430 & 3.248 \\
OffsetNet    & 1.596 & 3.311 & 2.671 & 3.853 & 1.897 & 3.706 & 2.422 & 3.111 \\
FlexTENet    & 1.424 & 3.133 & 2.533 & 3.740 & 1.778 & 3.340 & 2.402 & 3.062 \\
FlexTENet\_noortho & 1.538 & 3.243 & 2.796 & 4.039 & 2.004 & 3.841 & 2.812 & 3.624 \\
Pretrain     & 0.923 & 0.971 & 1.246 & 1.230 & 1.038 & 1.315 & 1.452 & 1.877 \\
IWDD         & \textbf{0.920} & \textbf{0.956} & \textbf{1.231} & \textbf{1.135} & \textbf{0.968} & \textbf{1.100} & \textbf{1.352} & \textbf{1.591} \\
\midrule
& \multicolumn{4}{c}{Dataset 9} & \multicolumn{4}{c}{Dataset 10} \\
\cmidrule(lr){2-5} \cmidrule(lr){6-9}
& RMSE$_{0,\text{in}}$ & RMSE$_{0,\text{out}}$ & RMSE$_{1,\text{in}}$ & RMSE$_{1,\text{out}}$
& RMSE$_{0,\text{in}}$ & RMSE$_{0,\text{out}}$ & RMSE$_{1,\text{in}}$ & RMSE$_{1,\text{out}}$ \\
\midrule
TNet         & 2.835 & 3.048 & 3.852 & 3.861 & 1.724 & 2.037 & 3.708 & 3.858 \\
TNet\_reg    & 2.843 & 3.089 & 3.576 & 3.790 & 1.735 & 1.959 & 2.573 & 2.874 \\
TARNet       & 2.762 & 3.057 & 3.305 & 3.565 & 1.644 & 1.970 & 2.252 & 2.508 \\
TARNet\_reg  & 2.717 & 3.056 & 3.168 & 3.471 & 1.579 & 1.880 & 2.051 & 2.304 \\
OffsetNet    & 2.685 & 3.035 & 3.100 & 3.501 & 1.613 & 1.973 & 2.187 & 2.502 \\
FlexTENet    & 2.508 & 2.821 & 3.037 & 3.324 & 1.505 & 1.659 & 2.058 & 2.332 \\
FlexTENet\_noortho & 2.748 & 3.081 & 3.289 & 3.568 & 1.688 & 2.019 & 2.424 & 2.735 \\
Pretrain     & 1.095 & \textbf{1.066} & \textbf{0.904} & 1.006 & 0.950 & 1.068 & 1.437 & 1.407  \\
IWDD         & \textbf{1.085} & 1.198 & 0.906 & \textbf{0.962}  & \textbf{0.931} & \textbf{1.062} & \textbf{1.392} & \textbf{1.396}\\
\bottomrule
\end{tabular}
\end{table}

\begin{table}[htbp]
\centering
\caption{PEHE (in-sample and out-of-sample) on ACIC 2016-2, 2016-7, and 2016-26}
\label{tab:acic2016-pehe}
\begin{tabular}{lcccccc}
\hline
\textbf{Algorithm} & \multicolumn{2}{c}{\textbf{2016-2}} & \multicolumn{2}{c}{\textbf{2016-7}} & \multicolumn{2}{c}{\textbf{2016-26}} \\
\cmidrule(lr){2-3} \cmidrule(lr){4-5} \cmidrule(lr){6-7}
& PEHE$_{\text{in}}$ & PEHE$_{\text{out}}$ & PEHE$_{\text{in}}$ & PEHE$_{\text{out}}$ & PEHE$_{\text{in}}$ & PEHE$_{\text{out}}$ \\
\hline
Causal Forest (CF)   & \textbf{0.322} & \textbf{0.320} & 3.832 & 2.969 & 2.613 & 3.234 \\
T-learner            & 1.071 & 1.054 & 3.605 & 2.790 & 2.493 & 3.015 \\
S-learner            & 0.906 & 0.873 & 3.902 & 3.075 & 2.721 & 3.357 \\
TNet                 & 4.114 & 4.315 & 4.180 & 4.377 & 3.534 & 4.219 \\
TNet\_reg            & 2.528 & 2.560 & 2.961 & 5.256 & 3.218 & 3.984 \\
TARNet               & 2.157 & 2.164 & 2.899 & 5.230 & 2.695 & 3.485 \\
TARNet\_reg          & 1.605 & 1.577 & 2.740 & 5.104 & 2.506 & 3.268 \\
OffsetNet            & 1.454 & 1.464 & 2.865 & 5.221 & 2.600 & 3.248 \\
FlexTENet            & 1.466 & 1.450 & 2.660 & 5.026 & 2.544 & 3.569 \\
FlexTENet\_noortho   & 2.205 & 2.199 & 2.931 & 5.258 & 2.930 & 3.833 \\
DRNet                & 2.402 & 2.552 & 3.611 & 5.890 & 3.215 & 3.916 \\
DRNet\_TAR           & 1.594 & 1.599 & 3.154 & 5.427 & 2.825 & 3.526 \\
Pretrain             & 0.849 & 0.851 & \textbf{1.128} & \textbf{1.034} & \textbf{1.377} & \textbf{1.562} \\
\textbf{IWDD}        & 0.852 & 0.851 & 1.130 & 1.036 & 1.387 & \textbf{1.562} \\
\hline
\end{tabular}
\end{table}

\subsection{ACIC 2018}
\label{app:acic2018_exp_results}

We present the complete RMSE results for each of the 12 ACIC 2018 datasets in Table~\ref{tab:rmse-compact}, with in-sample and out-of-sample RMSE reported for the POs $Y(0)$ and $Y(1)$. The full PEHE results for each dataset are provided in Table~\ref{tab:acic2018-pehe}.

\vspace{-1em}
\renewcommand{\arraystretch}{0.93}
\begin{table}[htbp]
\centering
\scriptsize
\caption{RMSE for potential outcomes $Y(0)$ and $Y(1)$ (in-sample and out-of-sample) across 12 ACIC 2018 datasets. Each 4-column block corresponds to one dataset, labeled by its ID. The best result across all methods is highlighted in \textbf{bold}, and the best result among the diffusion-based approaches (DiffPO, Pretrain, IWDD) is additionally marked with a $\star$.}
\label{tab:rmse-compact}
\begin{tabular}{lccccccccc}
\toprule
& \multicolumn{4}{c}{\texttt{e36aca1030264e638452ea4053cbb42c}} & \multicolumn{4}{c}{\texttt{d4ae3280e4e24ca395533e429726fafc}} \\
\cmidrule(lr){2-5} \cmidrule(lr){6-9}
& RMSE$_{0, \text{in}}$ & RMSE$_{0, \text{out}}$ & RMSE$_{1, \text{in}}$ & RMSE$_{1, \text{out}}$ 
& RMSE$_{0, \text{in}}$ & RMSE$_{0, \text{out}}$ & RMSE$_{1, \text{in}}$ & RMSE$_{1, \text{out}}$ \\
\midrule
T-learner   & 273.048 & 353.999 & 273.043 & 353.783 & 2.138 & 2.165 & 2.067 & 2.167 \\
S-learner   & 264.114 & 353.674 & 264.136 & 353.746 & 2.241 & 2.280 & 2.202 & 2.293 \\
TNet        & 0.872   & 1.119   & 0.877   & 1.118  & 0.520  & 0.545  & 0.528  & 0.565 \\
TARNet      & 0.874   & 1.113   & 0.887   & 1.130  & 0.532  & 0.550  & 0.515  & 0.573 \\
OffsetNet   & 0.836   & 1.163   & 0.852   & 1.204  &\textbf{ 0.472}  & 0.614  & 0.554  & 0.681 \\
FlexTENet   & \textbf{0.818}  & 1.155   &\textbf{ 0.836}   & 1.185  & 0.475  & 0.565  & \textbf{0.486}  & 0.607 \\
\arrayrulecolor{gray!60}\cmidrule(lr){1-9}\arrayrulecolor{black}
DiffPO  & 1.025 & 1.043 & 1.025 & 1.045  & 0.509 & 0.454 & 0.509$^\star$  & \textbf{0.454}$^\star$  \\
Pretrain    & 1.019 & 1.037 & 1.021 & 1.040 & 0.517 & 0.459 & 1.697 & 1.697       \\
IWDD        & 1.012$^\star$ & \textbf{1.028}$^\star$ & 1.014$^\star$ & \textbf{1.030}$^\star$ & 0.503$^\star$  & \textbf{0.447}$^\star$  & 1.786 & 1.782     \\
\midrule
& \multicolumn{4}{c}{\texttt{d1546da12d8e4daf8fe6771e2187954d}} & \multicolumn{4}{c}{\texttt{ae51149d38ce42609e00bf5701e4fe88}} \\
\cmidrule(lr){2-5} \cmidrule(lr){6-9}
& RMSE$_{0, \text{in}}$ & RMSE$_{0, \text{out}}$ & RMSE$_{1, \text{in}}$ & RMSE$_{1, \text{out}}$ 
& RMSE$_{0, \text{in}}$ & RMSE$_{0, \text{out}}$ & RMSE$_{1, \text{in}}$ & RMSE$_{1, \text{out}}$ \\
\midrule
T-learner   & 48.249 & 2763.573 & 50.768 & 2829.238 & 17.130 & 18.984 & 17.288 & 18.481 \\
S-learner   & 47.393 & 2763.576 & 48.612 & 2829.269 & 16.731 & 18.683 & 16.802 & 18.335 \\
TNet        & 0.092  & 1.282  & 0.179  & 1.322    & 1.017  & 1.124  & 1.003  & 1.073 \\
TARNet     & 0.058  & 1.278  & 0.109  & 1.311    & 1.038  & 1.153  & 0.993  & 1.069 \\
OffsetNet   & \textbf{0.057}  & 1.279  & \textbf{0.058}  & 1.309   & 0.976  & 1.306  & 0.890  & 1.200 \\
FlexTENet   & 0.884  & 1.055    & 1.076  & 1.203    & 1.029  & 1.296  & 0.927  & 1.117 \\
\arrayrulecolor{gray!60}\cmidrule(lr){1-9}\arrayrulecolor{black}
DiffPO  & 1.105 & 0.027 & 1.131 & 0.026  & \textbf{0.958}$^\star$ & \textbf{1.050}$^\star$ & \textbf{0.957}$^\star$ & \textbf{1.051}$^\star$ \\
Pretrain    & 1.105  & 0.028    & 1.131  & 0.029    & 0.996  & 1.067  & 0.980  & 1.052 \\
IWDD       & 1.105 & \textbf{0.026}$^\star$ & 1.131 & \textbf{0.026} $^\star$         & 0.997 & 1.067 & 0.979 & \textbf{1.051 }$^\star$  \\
\midrule
& \multicolumn{4}{c}{\texttt{ac6e494cbc254dc599be26a2a17f229c}} & \multicolumn{4}{c}{\texttt{9333a461d3944d089ef60cdf3b88fd40}} \\
\cmidrule(lr){2-5} \cmidrule(lr){6-9}
& RMSE$_{0, \text{in}}$ & RMSE$_{0, \text{out}}$ & RMSE$_{1, \text{in}}$ & RMSE$_{y_1, \text{out}}$ 
& RMSE$_{0, \text{in}}$ & RMSE$_{0, \text{out}}$ & RMSE$_{1, \text{in}}$ & RMSE$_{1, \text{out}}$ \\
\midrule
T-learner   & 15.834 & 16.559 & 15.882 & 16.962 & 7.754 & 8.281 & 9.244 & 9.427 \\
S-learner   & 15.503 & 16.432 & 15.391 & 16.621 & 7.585 & 8.161 & 8.689 & 8.943 \\
TNet        & 0.993  & 1.050  & 1.066  & 1.147  & 1.008 & 1.073 & 1.184 & 1.214 \\
TARNet      & 0.996  & 1.059  & 1.073  & 1.154  & 1.000 & 1.087 & 1.195 & 1.244 \\
OffsetNet   & \textbf{0.903}  & 1.144  & 1.022  & 1.313  &\textbf{0.910} & 1.191 & 1.073 & 1.333 \\
FlexTENet   & 0.927  & 1.096  & 1.033  & 1.271  & 0.949 & 1.149 & 1.119 & 1.265 \\
\arrayrulecolor{gray!60}\cmidrule(lr){1-9}\arrayrulecolor{black}
DiffPO  & 1.004$^\star$ & 1.020 & \textbf{1.004}$^\star$ & 1.021 & 0.995$^\star$ & \textbf{1.056}$^\star$& \textbf{0.999}$^\star$ & \textbf{1.058$^\star$} \\
Pretrain    & 1.147  & 1.126  & 1.239  & 1.213  & 1.070 & 1.113 & 1.215 & 1.252 \\
IWDD        & 1.010  & \textbf{0.970}$^\star$ & 1.045 & \textbf{0.994}$^\star$ &    1.016   &   1.101    &   1.125    &   1.232    \\
\midrule
& \multicolumn{4}{c}{\texttt{8ff38d337ec842dab1b8c01076e24816}} & \multicolumn{4}{c}{\texttt{74420a1794304013bb7a5a8f61994d71}} \\
\cmidrule(lr){2-5} \cmidrule(lr){6-9}
& RMSE$_{0, \text{in}}$ & RMSE$_{0, \text{out}}$ & RMSE$_{1, \text{in}}$ & RMSE$_{1, \text{out}}$ 
& RMSE$_{0, \text{in}}$ & RMSE$_{0, \text{out}}$ & RMSE$_{1, \text{in}}$ & RMSE$_{1, \text{out}}$ \\
\midrule
T-learner   & 21.940 & 21.629 & 21.018 & 21.543 & 182.881 & 187.618 & 186.001 & 187.741 \\
S-learner   & 21.199 & 21.207 & 20.632 & 21.232 & 179.791 & 187.670 & 181.017 & 187.627 \\
TNet        & 1.079  & 1.098  & 1.042  & 1.090  & 1.008   & 1.047   & 1.026   & 1.050 \\
TARNet      & 1.070  & 1.077  & 1.042  & 1.096  & 1.014   & 1.037   & 1.039   & 1.045 \\
OffsetNet   & \textbf{0.970}  & 1.218  & \textbf{0.951}  & 1.244  & \textbf{0.909}   & 1.168   & \textbf{0.916}   & 1.195 \\
FlexTENet   & 1.014  & 1.106  & 0.984  & 1.127  & 0.923   & 1.141   & 0.944   & 1.124 \\
\arrayrulecolor{gray!60}\cmidrule(lr){1-9}\arrayrulecolor{black}
DiffPO  & 2344.775 & 2302.331 & 4969.306 & 4861.299 & 1.050 & 1.007 & 1.051 & 1.004\\
Pretrain    & 1.063  & 1.063  & 1.068  & 1.068  & 1.045   & 0.990   & 1.045   & 0.990 \\
IWDD     & 1.046$^\star$& \textbf{1.050}$^\star$ & 1.049$^\star$ & \textbf{1.052}$^\star$& 1.023 $^\star$& \textbf{0.970}$^\star$ & 1.024$^\star$ & \textbf{0.970}$^\star$ \\
\midrule
& \multicolumn{4}{c}{\texttt{110f6dc8583c456ea0dd242d5d598497}} & \multicolumn{4}{c}{\texttt{3ebc51612e034ff99e8632a228dae430}} \\
\cmidrule(lr){2-5} \cmidrule(lr){6-9}
& RMSE$_{0, \text{in}}$ & RMSE$_{0, \text{out}}$ & RMSE$_{1, \text{in}}$ & RMSE$_{y_1, \text{out}}$ 
& RMSE$_{0, \text{in}}$ & RMSE$_{0, \text{out}}$ & RMSE$_{1, \text{in}}$ & RMSE$_{1, \text{out}}$ \\
\midrule
T-learner   & 0.404 & 0.416 & \textbf{0.716} & \textbf{0.731} & 33.291 & 34.316 & 36.608 & 36.009 \\
S-learner   & \textbf{0.395} & \textbf{0.415} & 0.727 & 0.747 & 33.082 & 34.230 & 34.997 & 35.051 \\
TNet        & 0.723 & 0.754 & 1.271 & 1.313 & 1.006 & 1.029 & 1.126 & 1.147 \\
TARNet      & 0.727 & 0.756 & 1.264 & 1.302 & 1.017 & 1.034 & 1.119 & 1.102 \\
OffsetNet   & 0.669 & 0.883 & 1.186 & 1.384 & \textbf{0.920} & 1.145 & 1.016 & 1.210 \\
FlexTENet   & 0.685 & 0.810 & 1.213 & 1.343 & 0.963 & 1.096 & 1.059 & 1.151 \\
\arrayrulecolor{gray!60}\cmidrule(lr){1-9}\arrayrulecolor{black}
DiffPO & 1.272 & 1.277 & 1.874 & 1.885& 128.388 & 94.869 & 133.157 & 99.067 \\
Pretrain    & 0.760 & 0.745 & 1.505 & 1.481 & 0.998  & 1.020  & 0.997  & 1.019 \\
IWDD        & 0.753$^\star$ & 0.717$^\star$ & 1.486$^\star$ & 1.402$^\star$  & 0.995$^\star$  & \textbf{1.018}$^\star$  & \textbf{0.996}$^\star$  & \textbf{1.018} $^\star$       \\
\midrule
& \multicolumn{4}{c}{\texttt{5a147c7e542a4ea5b22da127b654666b}} & \multicolumn{4}{c}{\texttt{5ad181455e954bcba44743e1f2d7824e}} \\
\cmidrule(lr){2-5} \cmidrule(lr){6-9}
& RMSE$_{0, \text{in}}$ & RMSE$_{0, \text{out}}$ & RMSE$_{1, \text{in}}$ & RMSE$_{y_1, \text{out}}$ 
& RMSE$_{0, \text{in}}$ & RMSE$_{0, \text{out}}$ & RMSE$_{1, \text{in}}$ & RMSE$_{1, \text{out}}$ \\
\midrule
T-learner   & 129.082 & 142.823 & 136.431 & 144.381 & 59.179 & 62.521 & 69.447 & 68.557 \\
S-learner   & 127.209 & 142.641 & 129.352 & 142.609 & 58.596 & 62.059 & 65.345 & 65.455 \\
TNet        & 0.957   & 1.056   & 1.133   & 1.195   & 1.005  & \textbf{1.066}  & 1.132  & 1.125 \\
TARNet      & 0.943   & 1.057   & 1.124   & 1.187   & 0.995  & 1.068  & 1.141  & 1.137 \\
OffsetNet   & \textbf{0.870}   & 1.128   & 1.104   & 1.356   & \textbf{0.941}  & 1.136  & \textbf{1.046}  & 1.181 \\
FlexTENet   & 0.892   & 1.088   & 1.082   & 1.232   & 0.958  & 1.138  & 1.084  & 1.166 \\
\arrayrulecolor{gray!60}\cmidrule(lr){1-9}\arrayrulecolor{black}
DiffPO & 3204.518 & 3267.525 & 6589.601 & 6733.434 & 1.119 & 1.118 & 1.380 & 1.141 \\
Pretrain    & 0.993 & 1.058 & 0.990 & 1.056        & 1.157 & 1.117 & 1.148 & 1.123      \\
IWDD       & 0.987$^\star$  & \textbf{1.053}$^\star$  & \textbf{0.989}$^\star$  & \textbf{1.053}$^\star$         & 1.109$^\star$ & 1.113$^\star$ & 1.110$^\star$  & \textbf{1.113}$^\star$       \\
\bottomrule

\addlinespace
\end{tabular}
\end{table}

\renewcommand{\arraystretch}{1}

\begin{table}[htbp]
\centering
\footnotesize
\caption{PEHE (in-sample and out-of-sample) across 12 ACIC 2018 datasets. Each 2-column block corresponds to one dataset, labeled by its ID (truncated). The best performance across all methods is marked in \textbf{bold}, and the best among diffusion-based methods is marked with $\star$.}
\label{tab:acic2018-pehe}
\begin{tabular}{lcccccc}
\toprule
 & \multicolumn{2}{c}{\texttt{e36aca10...}} & \multicolumn{2}{c}{\texttt{d4ae3280...}} & \multicolumn{2}{c}{\texttt{d1546da1...}} \\
\cmidrule(lr){2-3} \cmidrule(lr){4-5} \cmidrule(lr){6-7}
& PEHE$_{\text{in}}$ & PEHE$_{\text{out}}$ & PEHE$_{\text{in}}$ & PEHE$_{\text{out}}$ & PEHE$_{\text{in}}$ & PEHE$_{\text{out}}$ \\
\midrule
Causal Forest & 15.577 & 12.722 & 0.690 & 0.684 & 4.354 & 65.782 \\
T-learner     & 25.286 & 15.863 & 0.696 & 0.679 & 6.140 & 65.813 \\
S-learner     & 5.704  & 4.951  & 1.609 & 1.611 & 1.966 & 65.705 \\
TNet          & 0.370  & 0.343  & 0.331  & 0.318  & 0.165 & 0.153 \\
TARNet        & 0.416  & 0.410  & 0.385  & 0.412  & 0.125 & 0.135 \\
OffsetNet     & 0.437  & 0.437  & 0.361  & 0.349  & 0.003 & 0.030 \\
FlexTENet     & 0.397  & 0.397  & 0.313  & 0.311  & 0.692 & 0.687 \\
DRNet         & 0.406  & 0.400  & 0.325  & 0.356  & 0.177 & 0.196 \\
\cmidrule(lr){1-7}
DiffPO  & 0.032 & 0.053  & \textbf{0.025}$^\star$ & \textbf{0.017}$^\star$  & 0.031 & 0.009\\
Pretrain      & \textbf{0.011}$^\star$ & \textbf{0.011}$^\star$ & 1.744 & 1.742       & 0.026 & \textbf{0.001}$^\star$ \\
IWDD          & \textbf{0.011}$^\star$ & \textbf{0.011}$^\star$ &    1.741     &     1.741   & \textbf{0.001}$^\star$ & \textbf{0.001}$^\star$     \\
\midrule
\addlinespace
 & \multicolumn{2}{c}{\texttt{ae51149...}} & \multicolumn{2}{c}{\texttt{ac6e494...}} & \multicolumn{2}{c}{\texttt{9333a461...}} \\
\cmidrule(lr){2-3} \cmidrule(lr){4-5} \cmidrule(lr){6-7}
& PEHE$_{\text{in}}$ & PEHE$_{\text{out}}$ & PEHE$_{\text{in}}$ & PEHE$_{\text{out}}$ & PEHE$_{\text{in}}$ & PEHE$_{\text{out}}$ \\
\midrule
Causal Forest & 9.740 & 9.686 & 7.499 & 7.505 & 7.480 & 7.481 \\
T-learner     & 9.722 & 9.602 & 7.597 & 7.545 & 7.531 & 7.520 \\
S-learner     & 7.823 & 7.744 & 5.188 & 5.167 & 6.115 & 6.099 \\
TNet          & 0.673 & 0.657 & 0.686 & 0.679 & 0.995 & 0.983 \\
TARNet        & 0.744 & 0.737 & 0.723 & 0.739 & 1.030 & 1.033 \\
OffsetNet     & 0.798 & 0.779 & 0.739 & 0.750 & 0.948 & 0.950 \\
FlexTENet     & 0.841 & 0.834 & 0.717 & 0.728 & 1.000 & 1.003 \\
DRNet         & 0.721 & 0.719 & 0.648 & 0.653 & 0.981 & 0.967 \\
\cmidrule(lr){1-7}
DiffPO        & \textbf{0.016}$^\star$ & \textbf{0.026}$^\star$ & \textbf{0.022}$^\star$ & \textbf{0.019}$^\star$  & \textbf{0.038}$^\star$ & \textbf{0.021}$^\star$ \\
Pretrain      & 0.238 & 0.230 & 0.183 & 0.181 & 0.333 & 0.310 \\
IWDD         &  0.229 & 0.229   & 0.195 & 0.181 &  0.320     &   0.322    \\
\midrule
\addlinespace
& \multicolumn{2}{c}{\texttt{8ff38d...}} & \multicolumn{2}{c}{\texttt{74420a...}} & \multicolumn{2}{c}{\texttt{110f6dc...}} \\
\cmidrule(lr){2-3} \cmidrule(lr){4-5} \cmidrule(lr){6-7}
& PEHE$_{\text{in}}$ & PEHE$_{\text{out}}$ & PEHE$_{\text{in}}$ & PEHE$_{\text{out}}$ & PEHE$_{\text{in}}$ & PEHE$_{\text{out}}$ \\
\midrule
Causal Forest & 12.302 & 12.298 & 13.113 & 11.653 & 0.523 & \textbf{0.525} \\
T-learner     & 12.390 & 12.318 & 19.321 & 13.813 & \textbf{0.521} & \textbf{0.525} \\
S-learner     & 9.436  & 9.382  & 8.398  & 6.172  & 0.560 & 0.563 \\
TNet          & 0.717  & 0.718  & 0.404  & 0.382  & 0.594 & 0.588 \\
TARNet        & 0.734  & 0.743  & 0.448  & 0.456  & 0.617 & 0.613 \\
OffsetNet     & 0.701  & 0.714  & 0.465  & 0.475  & 0.563 & 0.556 \\
FlexTENet     & 0.667  & 0.676  & 0.448  & 0.449  & 0.612 & 0.603 \\
DRNet         & 0.714  & 0.712  & 0.426  & 0.414  & 0.684 & 0.680 \\
\cmidrule(lr){1-7}
DiffPO  & 3590.180 & 3585.164 & 0.017 & 0.054 & 0.828$^\star$ & 0.830$^\star$ \\
Pretrain      & 0.014  & 0.014  & \textbf{0.002}$^\star$  & \textbf{0.002}$^\star$ & 1.032 & 1.023 \\
IWDD          &  \textbf{ 0.009}$^\star$  &  \textbf{ 0.009}$^\star$  & \textbf{0.002}$^\star$ & \textbf{0.002}$^\star$ &1.044 & 1.032   \\
\midrule
\addlinespace
 & \multicolumn{2}{c}{\texttt{3ebc516...}} & \multicolumn{2}{c}{\texttt{5a147c...}} & \multicolumn{2}{c}{\texttt{5ad181...}} \\
\cmidrule(lr){2-3} \cmidrule(lr){4-5} \cmidrule(lr){6-7}
& PEHE$_{\text{in}}$ & PEHE$_{\text{out}}$ & PEHE$_{\text{in}}$ & PEHE$_{\text{out}}$ & PEHE$_{\text{in}}$ & PEHE$_{\text{out}}$ \\
\midrule
Causal Forest & 14.542 & 14.531 & 33.209 & 32.968 & 42.400 & 42.450 \\
T-learner     & 14.957 & 14.743 & 35.354 & 33.761 & 42.580 & 42.509 \\
S-learner     & 10.452 & 10.354 & 14.072 & 13.650 & 32.033 & 32.030 \\
TNet          & 0.707  & 0.667  & 0.706  & 0.667  & 0.731  & 0.729 \\
TARNet        & 0.699  & 0.672  & 0.699  & 0.672  & 0.790  & 0.809 \\
OffsetNet     & 0.767  & 0.756  & 0.767  & 0.756  & 0.721  & 0.721 \\
FlexTENet     & 0.676  & 0.633  & 0.676  & 0.633  & 0.785  & 0.795 \\
DRNet         & 0.716  & 0.698  & 0.716  & 0.698  & 0.773  & 0.783\\
\cmidrule(lr){1-7}
DiffPO        & 5.037 & 4.198 &3402.876 & 3479.084 & 0.932 & 0.232 \\
Pretrain     & 0.106  & 0.084 & 0.103 & 0.069 & \textbf{0.027}$^\star$ & \textbf{0.0001}$^\star$   \\
IWDD          &    \textbf{0.052}$^\star$ & \textbf{0.052}$^\star$   & \textbf{0.034}$^\star$ & \textbf{0.034}$^\star$     & 0.057 & \textbf{0.0001}$^\star$       \\
\bottomrule
\end{tabular}
\end{table}

\begin{table}[htbp]
\centering
\small
\caption{Win rates (\%) and mean RMSE\textsubscript{±SD} for $Y(0)$, $Y(1)$ across 12 ACIC 2018 datasets. The best result across all methods is highlighted in \textbf{bold}, and the best result among the diffusion-based approaches (DiffPO, Pretrain, IWDD) is additionally marked with a $\star$.}
\label{tab:rmse-summary-acic2018-sd}
\begin{tabular}{lcccccccc}
\toprule
\textbf{Method} & \multicolumn{4}{c}{In-sample} & \multicolumn{4}{c}{Out-of-sample} \\
\cmidrule(lr){2-5} \cmidrule(lr){6-9}
& Win$_0$ & RMSE$_0$ & Win$_1$ & RMSE$_1$ & Win$_0$ & RMSE$_0$ & Win$_1$ & RMSE$_1$ \\
\midrule
T-learner     & 0 & 65\textsubscript{±85}& 8.3 & 68\textsubscript{±86} & 0 & 301\textsubscript{±782} & 8.3 & 307\textsubscript{±801} \\
S-learner     & 8.3 & 64\textsubscript{±83} & 0 & 66\textsubscript{±83} & 8.3 & 301\textsubscript{±783}  & 0 & 307\textsubscript{±801} \\
TNet          & 0 & 0.86\textsubscript{±0.29} & 0 & 0.96\textsubscript{±0.31} & 8.3 & 1.02\textsubscript{±0.19} & 0 & 1.11\textsubscript{±0.19} \\
TARNet        & 0 & 0.86\textsubscript{±0.29} & 0 & 0.96\textsubscript{±0.33} & 0 & 1.02\textsubscript{±0.19} & 0 & 1.11\textsubscript{±0.19} \\
OffsetNet     & \textbf{75} & \textbf{0.79}\textsubscript{±0.27} & \textbf{33.3} & \textbf{0.89}\textsubscript{±0.31} & 0 & 1.11\textsubscript{±0.19} & 0 & 1.22\textsubscript{±0.18} \\
FlexTENet     & 8.3 & 0.88\textsubscript{±0.16} & 16.7 & 0.99\textsubscript{±0.19} & 0 & 1.06\textsubscript{±0.19} & 0 & 1.15\textsubscript{±0.18} \\
\cmidrule(lr){1-9}
DiffPO        & 8.3$^\star$ & 474\textsubscript{±1091} & 25$^\star$ & 975\textsubscript{±2271} & 16.7 & 473\textsubscript{±1100} & 25 & 975\textsubscript{±2271} \\
Pretrain      & 0 & 0.99\textsubscript{±0.18} & 0 & 1.17\textsubscript{±0.22} & 0 & 0.90\textsubscript{±0.34} & 0 & 1.08\textsubscript{±0.39} \\
\textbf{IWDD}          & 0 & 0.96$^\star$\textsubscript{±0.17} & 16.7 & 1.04$^\star$\textsubscript{±0.24} & \textbf{66.7} & \textbf{0.88}$^\star$\textsubscript{±0.33} & \textbf{75} & \textbf{0.95}$^\star$\textsubscript{±0.40}\\
\bottomrule
\end{tabular}
\end{table}

\begin{table}[ht]
\centering
\small
\caption{Win rates and mean PEHE\textsubscript{±SD} across 12 ACIC 2018 datasets}
\label{tab:pehe-summary-acic2018-sd}
\begin{tabular}{lcccc}
\toprule
 & \multicolumn{2}{c}{In-sample} & \multicolumn{2}{c}{Out-of-sample} \\
\cmidrule(lr){2-3} \cmidrule(lr){4-5}
& Win rate (\%) & Mean PEHE & Win rate (\%) & Mean PEHE \\
\midrule
Causal Forest & 0\% & 13.452\textsubscript{±12.552} & 8\% & 18.190\textsubscript{±19.335} \\
T-learner     & 8\% & 15.175\textsubscript{±13.270} & 8\% & 18.724\textsubscript{±19.280} \\
S-learner     & 0\% & 8.613\textsubscript{±8.353} & 0\% & 13.619\textsubscript{±18.311} \\
TNet          & 0\%    & 0.600 \textsubscript{±0.228} & 0\% & 0.574\textsubscript{±0.230}  \\
TARNet        & 0\% & 0.618\textsubscript{±0.237} & 0\% & 0.619\textsubscript{±0.234} \\
OffsetNet     & 0\% & 0.606\textsubscript{±0.256} & 0\% & 0.606\textsubscript{±0.250} \\
FlexTENet     & 0\% & 0.652\textsubscript{±0.192} & 0\% & 0.646\textsubscript{±0.193} \\
DRNet         & 0\% & 0.607\textsubscript{±0.226} & 0\% & 0.606\textsubscript{±0.217} \\
\cmidrule(lr){1-5}
DiffPO        & 33\% & 583.4\textsubscript{±1361.3} & 33\% & 589.1\textsubscript{±1374.9} \\
Pretrain      & 25\% & 0.318\textsubscript{±0.532} & 33\% & 0.306\textsubscript{±0.535} \\
\textbf{IWDD} & \textbf{50\%} & \textbf{0.308}\textsubscript{±0.538} & \textbf{58\%} & \textbf{0.301}\textsubscript{±0.539} \\
\bottomrule
\end{tabular}
\begin{tablenotes}
\small
\item \textit{Note:} Ties are counted for both methods when calculating win rates.
\end{tablenotes}
\end{table}

\clearpage
\subsection{IHDP}
\label{app:ihdp_exp_results}

Tables~\ref{tab:ihdp_pehe} and~\ref{tab:ihdp_rmse} summarize the results on the IHDP dataset. IWDD achieved the best out-of-sample PEHE performance. However, since the IHDP dataset contains only 747 samples, several baselines outperformed IWDD on RMSE. This is likely because IWDD has a large number of parameters to optimize, and IHDP is a relatively small dataset.

\begin{table}[htbp]
\centering
\caption{IHDP1: RMSE}
\label{tab:ihdp_rmse}
\begin{tabular}{lcccc}
\hline
\multirow{2}{*}{\textbf{Algorithm}} & \multicolumn{4}{c}{\textbf{RMSE}} \\
\cmidrule(lr){2-5}
& $Y(0)$ (in) & $Y(0)$ (out) & $Y(1)$ (in) & $Y(1)$ (out) \\
\hline
T-learner & 2.862 & 3.287 & \textbf{0.355} & \textbf{0.357} \\
S-learner & 2.976 & 3.403 & 1.393 & 1.334 \\
TNet & 0.716 & 1.415 & 0.995 & 0.982 \\
TNet\_reg & 0.825 & 1.644 & 1.052 & 1.000 \\
TARNet & 0.804 & 1.639 & 1.028 & 0.957 \\
TARNet\_reg & 0.816 & 1.650 & 0.959 & 0.890 \\
OffsetNet & 0.913 & 1.704 & 1.667 & 1.376 \\
FlexTENet & 0.757 & 1.550 & 0.985 & 0.865 \\
FlexTENet\_noortho & 0.794 & 1.631 & 1.109 & 1.003 \\
Pretrain & \textbf{0.527} & \textbf{0.533} & 1.608 & 1.630 \\
IWDD & 0.545 & 0.597 & 1.789 & 1.902 \\
\hline
\end{tabular}
\end{table}

\begin{table}[htbp]
\centering
\caption{PEHE (in-sample and out-of-sample) on IHDP1}
\label{tab:ihdp_pehe}
\begin{tabular}{lcc}
\hline
\textbf{Algorithm} & PEHE$_\text{in}$ & PEHE$_\text{out}$ \\
\hline
Causal Forest (CF)   & 4.072 & 4.265 \\
T-learner            & 2.690 & 3.102 \\
S-learner            & 3.695 & 3.966 \\
TNet                 & 1.255 & 1.886 \\
TNet\_reg            & 1.365 & 2.070 \\
TARNet               & 1.336 & 2.028 \\
TARNet\_reg          & 1.290 & 2.010 \\
OffsetNet            & 1.771 & 2.320 \\
FlexTENet            & \textbf{1.142} & 1.786 \\
FlexTENet\_noortho   & 1.375 & 2.064 \\
DRNet                & 1.276 & 1.933 \\
DRNet\_TAR           & 1.277 & 1.741 \\
GANITE               & 1.923 & 2.433 \\
Pretrain             & 1.681 & 1.658 \\
IWDD                 & 1.698 & \textbf{1.655} \\
\hline
\end{tabular}
\end{table}

\end{document}